\newcommand{\qed}{}
\patchcmd{\algocf@makecaption@ruled}{\hsize}{\columnwidth}{}{} 
\patchcmd{\@algocf@start}{-1.5em}{0em}{}{} 
\newcommand{\set}[1]{\left\{#1\right\}}
\newcommand{\pr}[1]{\left(#1\right)}
\newcommand{\fpr}[1]{\mathopen{}\left(#1\right)}
\newcommand{\abs}[1]{{\left|#1\right|}}
\newcommand{\floor}[1]{{\left\lfloor#1\right\rfloor}}
\newcommand{\ceil}[1]{{\left\lceil#1\right\rceil}}
\newcommand{\define}{\leftarrow}
\DeclareRobustCommand{\dispfunc}[2]{%
	\ensuremath{%
		\ifthenelse{\equal{#2}{}}%
			{\mathit{#1}}%
			{\mathit{#1}\fpr{#2}}}}
\newcommand{\ent}[1]{\dispfunc{H}{#1}}
\newcommand{\gini}[1]{\dispfunc{G}{#1}}
\newcommand{\gain}[1]{\dispfunc{I}{#1}}
\newcommand{\ind}[1]{\dispfunc{ind}{#1}}
\newcommand{\bern}[1]{\dispfunc{Bern}{#1}}
\newcommand{\bigO}[1]{\dispfunc{\mathcal{O}}{#1}}
\newcommand{\dtname}[1]{\textsl{#1}}
\newcommand{\dtsparse}{\dtname{sparse}\xspace}
\newcommand{\algupdsparse}{\textsc{UpdEnt}\xspace}
\newcommand{\algupdgini}{\textsc{UpdGini}\xspace}
\newcommand{\algbasesparse}{\textsc{BaseEnt}\xspace}
\newcommand{\algbasegini}{\textsc{BaseGini}\xspace}
\definecolor{yafaxiscolor}{rgb}{0.3, 0.3, 0.3}
\definecolor{yafcolor1}{rgb}{0.4, 0.165, 0.553}
\definecolor{yafcolor2}{rgb}{0.949, 0.482, 0.216}
\definecolor{yafcolor3}{rgb}{0.47, 0.549, 0.306}
\definecolor{yafcolor4}{rgb}{0.925, 0.165, 0.224}
\definecolor{yafcolor5}{rgb}{0.141, 0.345, 0.643}
\definecolor{yafcolor6}{rgb}{0.965, 0.933, 0.267}
\definecolor{yafcolor7}{rgb}{0.627, 0.118, 0.165}
\definecolor{yafcolor8}{rgb}{0.878, 0.475, 0.686}
\newlength{\yafaxispad}
\newlength{\yaftlpad}
\newlength{\yaflabelpad}
\newlength{\yafaxiswidth}
\newlength{\yafticklen}
\def\pgfplots@drawtickgridlines@INSTALLCLIP@onorientedsurf#1{}
\newcommand{\yafdrawaxis}[4]{
	\pgfplotstransformcoordinatex{#1}\let\xmincoord=\pgfmathresult 
	\pgfplotstransformcoordinatex{#2}\let\xmaxcoord=\pgfmathresult 
	\pgfplotstransformcoordinatey{#3}\let\ymincoord=\pgfmathresult 
	\pgfplotstransformcoordinatey{#4}\let\ymaxcoord=\pgfmathresult 
	\pgfsetlinewidth{\yafaxiswidth} 
	\pgfsetcolor{yafaxiscolor}
	\pgfpathmoveto{\pgfpointadd{\pgfpointadd{\pgfplotspointrelaxisxy{0}{0}}{\pgfqpointxy{\xmincoord}{0}}}{\pgfqpoint{-0.5\yafaxiswidth}{\yafaxispad}}}
	\pgfpathlineto{\pgfpointadd{\pgfpointadd{\pgfplotspointrelaxisxy{0}{0}}{\pgfqpointxy{\xmaxcoord}{0}}}{\pgfqpoint{0.5\yafaxiswidth}{\yafaxispad}}}
	\pgfpathmoveto{\pgfpointadd{\pgfpointadd{\pgfplotspointrelaxisxy{0}{0}}{\pgfqpointxy{0}{\ymincoord}}}{\pgfqpoint{\yafaxispad}{-0.5\yafaxiswidth}}}
	\pgfpathlineto{\pgfpointadd{\pgfpointadd{\pgfplotspointrelaxisxy{0}{0}}{\pgfqpointxy{0}{\ymaxcoord}}}{\pgfqpoint{\yafaxispad}{0.5\yafaxiswidth}}}
	\pgfusepath{stroke}
}
\pgfplotsset{axis y line=left, axis x line=bottom,
	tick align=outside,
	tickwidth=\yafticklen,
	clip = false,
    x axis line style= {-, line width = 0pt, color=black!0},
    y axis line style= {-, line width = 0pt, color=black!0},
    x tick style= {line width = \yafaxiswidth, color=yafaxiscolor, yshift = \yafaxispad},
    y tick style= {line width = \yafaxiswidth, color=yafaxiscolor, xshift = \yafaxispad},
    x tick label style = {font=\small, yshift = \yaftlpad, inner xsep = 0pt},
    y tick label style = {font=\small, xshift = \yaftlpad},
    every axis y label/.style = {at = {(ticklabel cs:0.5)}, rotate=90, anchor=center, font=\small, yshift = -\yaflabelpad, inner sep = 0pt},
    every axis x label/.style = {at = {(ticklabel cs:0.5)}, anchor=center, font=\small, yshift = \yaflabelpad},
    x tick label style = {font=\small, yshift = 1pt},
    grid = major,
    major grid style  = {dash pattern = on 1pt off 3 pt},
	every axis plot post/.append style= {line width=\yafaxiswidth} ,
	legend cell align = left,
	legend style = {inner sep = 1pt, cells = {font=\scriptsize}},
	legend image code/.code={%
		\draw[mark repeat=2,mark phase=2,#1] 
		plot coordinates { (0cm,0cm) (0.15cm,0cm) (0.3cm,0cm) };%
	} 
}
\newcommand{\pgfprintduration}[1]{%
	\ifthenelse{\equal{#1}{}}{---}{%
	\pgfmathsetmacro{\minutes}{floor(#1 / 60)}%
	\pgfmathsetmacro{\seconds}{#1 - 60*\minutes}%
	\pgfmathifthenelse{\minutes > 0}{"\pgfmathprintnumber{\minutes}m \pgfmathprintnumber[fixed,precision=0]{\seconds}s"}{"\pgfmathprintnumber{\seconds}s"}\pgfmathresult}}
\def\parsenode[#1]#2\pgf@nil{%
    \tikzset{label node/.style={#1}}
    \def\nodetext{#2}
}
\tikzset{
    add node at x/.style 2 args={
        name path global=plot line,
        /pgfplots/execute at end plot visualization/.append={
                \begingroup
                \@ifnextchar[{\parsenode}{\parsenode[]}#2\pgf@nil
            \path [name path global = position line #1-1]
                ({axis cs:#1,0}|-{rel axis cs:0,0}) --
                ({axis cs:#1,0}|-{rel axis cs:0,1});
            \path [xshift=1pt, name path global = position line #1-2]
                ({axis cs:#1,0}|-{rel axis cs:0,0}) --
                ({axis cs:#1,0}|-{rel axis cs:0,1});
            \path [
                name intersections={
                    of={plot line and position line #1-1},
                    name=left intersection
                },
                name intersections={
                    of={plot line and position line #1-2},
                    name=right intersection
                },
                label node/.append style={pos=1}
            ] (left intersection-1) -- (right intersection-1)
            node [label node]{\nodetext};
            \endgroup
        }
    },
    add node at y/.style 2 args={
        name path global=plot line,
        /pgfplots/execute at end plot visualization/.append={
                \begingroup
                \@ifnextchar[{\parsenode}{\parsenode[]}#2\pgf@nil
            \path [name path global = position line #1-1]
                ({axis cs:0,#1}-|{rel axis cs:0,0}) --
                ({axis cs:0,#1}-|{rel axis cs:1,1});
            \path [yshift=1pt, name path global = position line #1-2]
                ({axis cs:0,#1}-|{rel axis cs:0,0}) --
                ({axis cs:0,#1}-|{rel axis cs:1,1});
            \path [
                name intersections={
                    of={plot line and position line #1-1},
                    name=left intersection
                },
                name intersections={
                    of={plot line and position line #1-2},
                    name=right intersection
                },
                label node/.append style={pos=1}
            ] (left intersection-1) -- (right intersection-1)
            node [label node] {\nodetext};
            \endgroup
        }
    }
}
\begin{document}

\title{Approximating splits for decision trees quickly in sparse data streams\thanks{This research is supported by the Academy of Finland projects MALSOME (343045)}}

\author{Nikolaj Tatti\thanks{HIIT, University of Helsinki, nikolaj.tatti@helsinki.fi}}

\date{}

\maketitle              

\fancyfoot[R]{\scriptsize{Copyright \textcopyright\ 2025 by SIAM\\
Unauthorized reproduction of this article is prohibited}}

\begin{abstract}
\small\baselineskip=9pt
Decision trees are one of the most popular classifiers in the machine learning literature.
While the most common decision tree learning
algorithms treat data as a batch, numerous algorithms have been proposed
to construct decision trees from a data stream.
A standard training strategy involves augmenting the current tree by changing a leaf node into a split.
Here we typically maintain counters in each leaf which allow us to determine the optimal split, and whether the split
should be done. 
In this paper we focus on how to speed up the search for the optimal split when dealing with sparse binary features and a binary class.
We focus on finding splits that
have the
approximately optimal information gain or Gini index. 
In both cases finding the optimal split can be done in $\bigO{d}$ time,
where $d$ is the number of features.
We propose an algorithm that yields $(1 + \alpha)$ approximation when using conditional entropy in amortized $\bigO{\alpha^{-1}(1 + m\log d) \log \log n}$
time, where $m$ is the number of 1s in a data point, and $n$ is the number of data points.
Similarly, for Gini index, we achieve $(1 + \alpha)$ approximation in amortized $\bigO{\alpha^{-1} + m \log d}$ time.
Our approach is beneficial for sparse data where $m \ll d$.
In our experiments
we find almost-optimal splits efficiently, faster than the baseline, overperforming the theoretical approximation guarantees.
\end{abstract}

\section{Introduction}

Decision trees are one of the most popular classifiers in the machine learning literature.
While the most well-known decision tree learning
algorithms~\citep{quinlan1986induction,quinlan2014c4,breiman2017classification}
treat data as a batch, numerous algorithms have been proposed
to construct decision trees from a data stream.
In the streaming setup points arrive one-by-one, and the tree is updated on the fly.
A core step in almost every such learner is to determine the optimal split after a new data point.
A baseline approach for finding the optimal split is to test every feature, and by maintaining certain counters
this can be done in $\bigO{d}$ time, where $d$ is the number of features.

In this paper we consider two scenarios where we can speed up this step.
We focus on binary features and binary labels, and our goal is to speed up selecting features
when the features are sparse. Such case can occur when dealing for example with text data,
where the dictionary of words can be large while the actual inputs, for example, bag-of-word representations
of sentences, do not contain many active features. 

First, we consider optimizing information gain. 
We propose an algorithm that---after adding a data
point---finds a $(1 + \alpha)$-approximate optimal feature in $\bigO{\alpha^{-1} (1 + m \log d) \log \log n}$
amortized time, where $m$ is the number of 1s in a data point.
This speeds up the search if $m \ll d$ which is the case for sparse data.

Roughly speaking, the approach is based on grouping features with similar
probabilities $p(y = 1 \mid j = 0)$. That is, each group $i$ has an interval, say $B_i$,
and if $p(y = 1 \mid j = 0) \in B_i$, then feature $j$ will be stored in group $i$.
Features in each group are then stored in a search tree using a specific key.
To find the approximate feature we search each tree for the optimal value, and then select the
best feature among the candidates.

It turns out that we can only have  $\bigO{\alpha^{-1} \log \log n}$ such search trees,
and updating the key for a feature  $i$ is required only if we encounter a data point with $x_i = 1$.
This will eventually lead to the stated running time.

In the second setup,
we optimize Gini index.
We propose an algorithm  that
finds a $(1 + \alpha)$-approximate optimal feature in $\bigO{\alpha^{-1} + m \log d}$
amortized time. 

Our approach is similar to the approach for information gain with few technical differences.
Roughly speaking, when using information gain we do not need to delete
features from their outdated groups as $p(y = 1 \mid j = 0)$ is changing. The same does not hold
for Gini index: the features need to be removed from their old groups, potentially increasing the update time
significantly. Luckily, we can mitigate the issue by allowing the intervals $B_i$ associated with the groups to overlap.

\section{Related work}
\label{sec:related}

Numerous algorithms have been proposed to construct decision trees as data arrives in a stream.
\citet{schlimmer1986case} proposed an algorithm that monitors existing nodes
and the moment a split becomes suboptimal, the sub-tree is replaced with the new split.
\citet{utgoff1988id5} and \citet{utgoff1989incremental} proposed a variant of this approach, except that
the sub-tree is not thrown away but rather modified such that the split is updated. 

\citet{domingos2000mining}
proposed a popular choice for growing decision trees incrementally. The
algorithm ignores the previous data points when starting a new leaf, thus allowing to maintain only the label counters.
As a criterion, the authors propose to compare the best and the second best split, and used a Hoeffding bound
to decide whether to split. Unfortunately, \citet{rutkowski2012decision}
showed that the assumptions used to derive the bound were incorrect, and thus can only be used as a
heuristic. Moreover, the authors propose a more conservative bound based on McDiarmid's inequality.

\citet{manapragada2018extremely,manapragada2022eager}
introduced a variant that also monitors the non-leaf nodes, and if another
split becomes optimal (based on comparing the best and the second best feature) then the current sub-tree is
replaced with the new split.

\citet{jin2003efficient} proposed a test by estimating the entropy difference
with a normal distribution. A similar approach was also considered by~\citet{gratch1995sequential}.

\citet{bressan2023fully} and \citet{bressan2024dynamic} proposed maintaining decision trees
in a data stream within an \emph{additive} error.  Here, the goal is orthogonal to
ours: the authors' goal is to maintain a decision tree, done by reducing the number of split updates, while
our goal is to---under certain conditions---speed up the split. The algorithms proposed by the authors are linear
w.r.t. the number of features. Combining their approach with ours with the goal of obtaining a sublinear
update time is an interesting line of future work.

A popular related line of work is building decision trees while the underlying data stream distribution is changing.
\citet{hulten2001mining}
proposed an algorithm that adapts a decision tree to a changing data stream.
Here, the decision tree is based on data in a sliding window. If a non-leaf node becomes
suboptimal, an alternative tree is grown from that node, possibly replacing the current sub-tree.
Alternative approaches, where a user does not need to specify the window size, were considered by~\citet{bifet2009adaptive}.
Moreover, \citet{bifet2009new} proposed constructing an ensemble of trees of increasing sizes in order to adapt better to changes.

\section{Preliminary notation}
\label{sec:prel}

Assume that we are given a multiset $D$ of pairs $(x, y)$, where $x$ are the features 
and $y$ is the class of the sample.
Assume that features are binary and define
the counters 
\[
	n(D) = \abs{D} \quad\text{and}\quad
	n_{iv}(D) = \abs{\set{(x, y) \in D \mid x_i = v}}\quad.
\]
We also define the counters associated with each label, 
\[
\begin{split}
	c_k(D) & = \abs{\set{(x, y) \in D \mid y = k}} \quad\text{and}\quad \\
	c_{ivk}(D)  & = \abs{\set{(x, y) \in D \mid x_i = v, y = k}}\ .
\end{split}
\]

If $D$ is known from the context, we will often drop $D$ from the notation and write
$n$, $n_{iv}$, $n_{ivk}$, and $c_k$.

The entropy and the conditional entropy of a feature $i$ are defined as
\[
\begin{split}
	\ent{D} & = -\sum_{k} \frac{c_k(D)}{n(D)} \log \frac{c_k(D)}{n(D)}  \ \text{ and }\  \\
	\ent{D \mid i}  & = -\sum_{v} \sum_k \frac{c_{ivk}(D)}{n(D)} \log \frac{c_{ivk}(D)}{n_{iv}(D)}, 
\end{split}
\]
where the standard notation $0 \times \log 0 = 0$ applies.

A common criterion for the goodness of a feature is the information gain,
$\gain{i, D} = \ent{D} - \ent{D \mid i}$. Thus, 
finding the optimal split is equal to finding a feature with the smallest conditional entropy $\ent{D \mid i}$.

We also consider a popular alternative, namely a Gini index. The Gini index $\gini{D}$ and the conditional Gini index $\gini{D \mid i}$ 
are defined as
\[
\begin{split}
	\gini{D} & = \sum_{k} \frac{c_k(D)}{n(D)}\bigg(1 - \frac{c_k(D)}{n(D)}\bigg) \ \text{ and }  \\
	\gini{D \mid i}  & = \sum_{v, k} \frac{c_{ivk}(D)}{n(D)} \bigg(1 - \frac{c_{ivk}(D)}{n_{iv}(D)}\bigg)\ .
\end{split}
\]
When using Gini index, the best feature for a split is the one that minimizes $\gini{D \mid i}$. 

\section{Approximate splits when using entropy}
\label{sec:sparse}

In this section we provide a method that approximates the best binary feature for the split quickly.
We will assume throughout this section that the output variable $y$ is binary. 
We should stress that this assumption is crucial.
For notational simplicity, we also assume that all our features are binary. Naturally, if we have
features that are binary, categorical, or numerical, we can combine different search strategies,
and select the best out of three.

Let us first define 
\begin{equation}
\label{eq:fixent}
\begin{split}
	\ent{D \mid i, \theta} = &
		-\frac{c_{i00}}{n} \log (1 - \theta) - \frac{c_{i01}}{n} \log \theta  \\
		 & -\frac{c_{i10}}{n} \log \frac{c_{i10}}{n_{i1}} - \frac{c_{i11}}{n}  \log \frac{c_{i11}}{n_{i1}}\ .
\end{split}
\end{equation}
Note that $\ent{D \mid i, \theta} \geq \ent{D \mid i, c_{i01} / n_{i0}} = \ent{D \mid i}$, since the first two terms in Eq.~\ref{eq:fixent} form a negative,
scaled log-likelihood of a Bernoulli model, for which the optimal parameter is $c_{i01}/{n_0}$.

\begin{proposition}
\label{prop:decomp}
The conditional entropy can be decomposed into two parts,
\[
	\ent{D \mid i, \theta} = \frac{1}{n} \pr{C(c_0, c_1, \theta) + K(c_{i10}, c_{i11}, \theta)},
\]
where
\[
\begin{split}
	C(c_0, c_1, \theta) & = -c_{0} \log (1 - \theta) - c_{1} \log \theta, \quad\text{and} \\
	K(c_{i10}, c_{i11}, \theta) & = -c_{i10} \log \frac{c_{i10}}{n_{i1}(1 - \theta)} - c_{i11}  \log \frac{c_{i11}}{n_{i1}\theta}\quad.
\end{split}
\]
\end{proposition}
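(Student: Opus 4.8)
The plan is to verify the claimed decomposition by a direct algebraic rearrangement of Eq.~\ref{eq:fixent}, relying only on the marginalization relations among the counters. No inequalities or optimization are involved here; the statement is an exact identity, so the whole proof reduces to careful bookkeeping.

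First I would record the relations that tie the joint counts to the label marginals. Splitting on feature $i$ partitions each label class according to the value of $x_i$, so $c_0 = c_{i00} + c_{i10}$ and $c_1 = c_{i01} + c_{i11}$, while $n_{i1} = c_{i10} + c_{i11}$. These are the only facts about the counters I will need.

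Next I would expand $K(c_{i10}, c_{i11}, \theta)$ by splitting each logarithm through $\log\frac{a}{b(1-\theta)} = \log\frac{a}{b} - \log(1-\theta)$, and analogously for the factor $\theta$ in the second term. This separates $K$ into the ``pure'' contribution of the $x_i = 1$ branch, namely $-c_{i10}\log\frac{c_{i10}}{n_{i1}} - c_{i11}\log\frac{c_{i11}}{n_{i1}}$, plus correction terms $c_{i10}\log(1-\theta) + c_{i11}\log\theta$. In parallel I would expand $C(c_0, c_1, \theta)$ using the marginalization relations, writing $-c_0\log(1-\theta) = -c_{i00}\log(1-\theta) - c_{i10}\log(1-\theta)$ and likewise $-c_1\log\theta = -c_{i01}\log\theta - c_{i11}\log\theta$.

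Finally I would add $C$ and $K$ and observe that the correction terms produced by splitting the logarithms in $K$ exactly cancel the $c_{i10}\log(1-\theta)$ and $c_{i11}\log\theta$ pieces that appeared when expanding $C$ via the marginals. What survives is precisely
\[
-c_{i00}\log(1-\theta) - c_{i01}\log\theta - c_{i10}\log\tfrac{c_{i10}}{n_{i1}} - c_{i11}\log\tfrac{c_{i11}}{n_{i1}},
\]
i.e. $n\,\ent{D \mid i, \theta}$ as defined in Eq.~\ref{eq:fixent}; dividing by $n$ gives the claim. I do not expect a genuine obstacle: the only point to watch is the sign of the $\theta$-dependent terms when the two branches are recombined, so that the cancellation is clean and the denominators $n_{i1}$ inside $K$ are matched correctly against those in the definition.
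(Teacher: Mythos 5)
Your proposal is correct and matches the paper's own proof, which likewise verifies the identity by substituting $c_{i00} = c_0 - c_{i10}$ and $c_{i01} = c_1 - c_{i11}$ into Eq.~\ref{eq:fixent} and rearranging; your version merely spells out the log-splitting and cancellation that the paper leaves implicit.
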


Note that the definition of $K$ depends on $n_{i1}$ but $n_{i1} = c_{i10} + c_{i11}$, so $K$ is well-defined. 

\begin{proof}
The claim follows immediately by using the identities $c_{i00} = c_0 - c_{i10}$ and $c_{i01} = c_1 - c_{i11}$ 
together with Eq.~\ref{eq:fixent}, and rearranging the terms.
\end{proof}

Note that, for a \emph{fixed} $\theta$, the term $C(\cdot)$ does not depend on the feature $i$.
The term $K(\cdot)$ depends only on $c_{i10}$ and $c_{i11}$ (and $\theta$),
changes only for a feature $i$ when we process a data point $(x, y)$ with $x_i = 1$.

Our high-level strategy is as follows:
we construct a set of real numbers $\set{\mu_i}$, and to each $\mu_i$ we assign a set of features. We then find a feature with the smallest $K(\cdot, \cdot, \mu_i)$, and
among these candidates we select the feature $j$ with the smallest $\ent{D \mid j}$.

To be more specific, assume $\alpha > 0$ and let $D$ be a dataset with $n$ data points.
We define a set of intervals $B_{-1}, B_{-2}, \ldots, B_{-\ell}$ as
\[
    B_i = [s_i, t_i], \ \text{where}\ 
    s_i = 2^{-(1 + \alpha)^{-i}}, \ 
    t_i = 2^{-(1 + \alpha)^{-i-1}}, 
\]
for $i = -1, \ldots, -\ell$,
and $\ell$ is set such that $s_{-\ell} \leq 1/n < t_{-\ell}$. We also set
$B_{-\infty} = [s_{-\infty}, t_{-\infty}] = [0, 0]$.
We extend the definition to $B_i$ for $i = 1, \ldots, \ell$ by setting $B_i = [s_i, t_i]$, where $s_i = 1 - t_{-i}$ and $t_i = 1 - s_{-i}$.
Finally, for each bin $B_i$, we define a centroid $\mu_i = s_i$ for $i < 0$ and
$\mu_i = t_i$ for $i > 0$.

\begin{proposition}
\label{prop:sparsebound}
Let $\alpha$, $D$, $n$, $\set{B_i}$, and $\ell$ as defined above.
Let $j$ be a feature and let $\rho = c_{j01} / n_{j0}$.
For any $B_i$ such that $\rho \in B_i$, we have $\ent{D \mid j, \mu_i} \leq (1 + \alpha) \ent{D \mid j}$.
Moreover, there is $B_i$ such that $\rho \in B_i$.
The number of intervals $\abs{\set{B_i}}$ is in
$\bigO{\alpha^{-1}\log\log n}$. 
\end{proposition}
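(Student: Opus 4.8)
The plan is to separate the approximation error from the part of the conditional entropy that does not depend on $\theta$. Writing $f(\theta) = -\frac{c_{j00}}{n}\log(1-\theta) - \frac{c_{j01}}{n}\log\theta$ for the first two terms of Eq.~\ref{eq:fixent}, and collecting the remaining two terms into a $\theta$-independent quantity $g\ge 0$ (a nonnegative entropy-type term), we have $\ent{D\mid j,\theta} = f(\theta) + g$. Since the optimal parameter is $\rho$, so that $\ent{D\mid j} = \ent{D\mid j,\rho} = f(\rho)+g$, and since $g\ge 0$ and $\alpha\ge 0$, it suffices to prove the multiplicative bound $f(\mu_i)\le(1+\alpha)f(\rho)$; adding the common $g$ then gives $f(\mu_i)+g \le (1+\alpha)f(\rho)+g \le (1+\alpha)(f(\rho)+g)$, which is exactly $\ent{D\mid j,\mu_i}\le(1+\alpha)\ent{D\mid j}$. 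This reduction is the conceptual core: it lets me ignore the $K$-part entirely and argue only about the binary term $f$.

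First I would bound the two logarithms of $f$ separately for a negative bin $i<0$, where $\mu_i=s_i\le\rho\le t_i$. By construction $-\log_2 s_i = (1+\alpha)^{-i}$ and $-\log_2 t_i = (1+\alpha)^{-i-1}$, so the endpoints are spaced by a factor $(1+\alpha)$ on the logarithmic scale, giving $-\log\mu_i = (1+\alpha)(-\log t_i)$ independently of the base. Since $\rho\le t_i$ yields $-\log\rho\ge-\log t_i$, we get $-\log\mu_i\le(1+\alpha)(-\log\rho)$; and since $\mu_i\le\rho$ yields $1-\mu_i\ge1-\rho$, we get the stronger $-\log(1-\mu_i)\le-\log(1-\rho)$. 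Because $c_{j00},c_{j01}\ge0$ and all four logarithms are nonnegative, combining the two bounds term-by-term gives $f(\mu_i)\le(1+\alpha)f(\rho)$. The case $i>0$ follows by the symmetry $\theta\mapsto1-\theta$, which swaps the roles of $c_{j00}$ and $c_{j01}$ and maps $B_i$ to $B_{-i}$ with $\mu_i=1-\mu_{-i}$; the degenerate values $\rho\in\set{0,1}$ are handled by the bin $[0,0]$ (and, by the same mirroring, $[1,1]$) together with the convention $0\log0=0$.

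For coverage I would note that $\rho=c_{j01}/n_{j0}$ is a fraction with denominator $n_{j0}\le n$, so either $\rho\in\set{0,1}$ or $1/n\le\rho\le1-1/n$. Consecutive bins abut, since $s_{i+1}=2^{-(1+\alpha)^{-i-1}}=t_i$, so the negative bins tile $[s_{-\ell},t_{-1}]=[s_{-\ell},1/2]$ and, by the symmetric definition, the positive bins tile $[1/2,1-s_{-\ell}]$. Because $\ell$ is chosen so that $s_{-\ell}\le1/n$, the union of all bins contains $[1/n,1-1/n]$, which with the two degenerate bins covers every possible $\rho$.

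Finally, for the count I would solve the defining inequality $s_{-\ell}=2^{-(1+\alpha)^{\ell}}\le1/n$, i.e. $(1+\alpha)^{\ell}\ge\log_2 n$, obtaining $\ell=\bigO{\log\log n/\log(1+\alpha)}$. Using $\log(1+\alpha)=\Omega(\alpha)$ for $\alpha\in(0,1]$ turns this into $\ell=\bigO{\alpha^{-1}\log\log n}$, and since there are $2\ell+\bigO{1}$ bins in total the same bound holds for $\abs{\set{B_i}}$. I expect the delicate point to be the asymmetric log-bound in the second step---verifying that one logarithm already satisfies the stronger inequality while the other needs exactly the factor $(1+\alpha)$---rather than the counting or coverage, which are routine.
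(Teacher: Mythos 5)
Your proof is correct and takes essentially the same route as the paper: the same decomposition of Eq.~\ref{eq:fixent} into the Bernoulli part $f(\theta)$ and the nonnegative $\theta$-independent remainder, the same two per-logarithm bounds for $i<0$ (namely $-\log \mu_i \le (1+\alpha)(-\log \rho)$, equivalent to the paper's $\rho^{1+\alpha} \le \mu_i$, together with the stronger $-\log(1-\mu_i) \le -\log(1-\rho)$), the mirrored argument for $i>0$, and the same coverage and counting of the bins. The only cosmetic point is that the upper bound on $\ell$ formally follows from the second half of the defining condition, $1/n < t_{-\ell}$ (equivalently, minimality of $\ell$, which your abutting identity $s_{i+1}=t_i$ makes precise), rather than from $s_{-\ell}\le 1/n$ alone; since the two-sided condition determines $\ell$ uniquely, your conclusion is unaffected.
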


\begin{proof}
Let us first prove the first claim.
Note that $\ent{D \mid j} = \ent{D \mid j, \rho}$.
If $\rho = 0$,  then $\rho \in B_{-\infty}$ and $B_{-\infty}$ is the only one containing $\rho$. Moreover, $\mu_{-\infty} = \rho$ so the claim follows.
The case for $\rho = 1$ is similar.

Assume that $0 < \rho <  1$. Then $1/n \leq \rho \leq 1 - 1/n$, and thus there is $B_i$ such that $\rho \in B_i$.

Assume that $i < 0$. Then $\mu_i = s_i \leq \rho$, and $\rho \leq t_i = \mu_i^{(1 + \alpha)^{-1}}$ implying $\rho^{1 + \alpha} \leq \mu_i$.
Let $A$ be the last two terms in Eq.~\ref{eq:fixent}.
Then
\[
\begin{split}
    \ent{D \mid j, \mu_i} & = A -\frac{c_{j00}}{n} \log (1 - \mu_i) - \frac{c_{j01}}{n} \log \mu_i \\
    & \leq A -\frac{c_{j00}}{n} \log (1 - \rho) - (1 + \alpha)\frac{c_{j01}}{n} \log \rho \\
    & \leq (1 + \alpha)(A -\frac{c_{j00}}{n} \log (1 - \rho) - \frac{c_{j01}}{n} \log \rho ) \\
    & = (1 + \alpha)\ent{D \mid j, \rho},
\end{split}
\]
where the second inequality follows since $A$ and $-\frac{c_{j00}}{n} \log (1 - \rho)$ are both positive.

Assume $i > 0$.
Then, similarly,
$1 - \mu_i \leq 1 - \rho$ and $(1  - \rho)^{1 + \alpha} \leq 1 - \mu_i$. Thus
\[
\begin{split}
    \ent{D \mid j, \mu_i} & = A -\frac{c_{j00}}{n} \log (1 - \mu_i) - \frac{c_{j01}}{n} \log\mu_i \\
    & \leq A -(1 + \alpha)\frac{c_{j00}}{n} \log (1 - \rho) - \frac{c_{j01}}{n} \log \rho \\
    & \leq (1 + \alpha)\ent{D \mid j, \rho},
\end{split}
\]
proving the first claim.

To continue, note that
since $t_{-\ell} > 1/n$, we have $(1 + \alpha)^{\ell} > (1 + \alpha) \log_2 n$.
Solving for $\ell$ leads to 
\[
	\ell < 1 + \log_{1 + \alpha} \log_2 n \in 
	\bigO{\alpha^{-1} \log\log n }.
\]
There are $2(1 + \ell) \in \bigO{\alpha^{-1} \log \log n }$ intervals,
proving the second claim.
\qed
\end{proof}

Proposition~\ref{prop:sparsebound} provides us with an algorithm: for each $\mu_i$
we maintain a balanced search tree, say $T_i$, where features are stored with a
key of $K(c_{i10}, c_{i11}, \mu_i)$.  We guarantee that each feature $j$ is
stored in $T_i$ such that $\rho = \frac{c_{j01}}{n_{j0}} \in B_i$.  From each
tree, we select a candidate feature with the smallest $K(\cdot)$, and among
these candidates we select the feature with the smallest conditional entropy.

The issue with this approach is maintaining the invariant that $\rho \in B_i$.
If we were to require that a feature $j$ can be stored only in $T_i$, then moving
the feature $j$ between neighboring bins may become too computationally expensive.

Instead, we will allow storing the same feature in multiple trees. We will see later that this does not change the approximation guarantee.
Our algorithm is as follows.
\algupdsparse (see Algorithm~\ref{alg:updsparse}) maintains the following invariant:
each feature $j$ with $\rho_j = \frac{c_{j01}}{n_{j0}}$, that has occurred at least once, has two indices $a(j)$ and $b(j)$
such that there is $i$ with $a(j) \leq i \leq b(j)$ and $\rho_j \in B_i$.
The feature $j$ is then stored in the trees $T_{a(j)}, \ldots, T_{b(j)}$ (excluding $T_0$ as $B_0$ is not defined).

In order to maintain the invariant, whenever we process data point $(x, y)$ with $x_j = 1$, \algupdsparse removes $j$ from any 
current trees, and adds $j$ to a single $T_i$ such that $\rho_j \in B_i$.
In order to find the correct $T_i$, we define, 
\begin{equation*}
	\ind{\rho} =
	\begin{cases}
		-\infty, & \text{if } \rho = 0, \\
		-\ceil{\log_{1 + \alpha} \log_{1/2} \rho}, & \text{if } 0 < \rho < 1/2, \\
		1 & \text{if } \rho = 1/2, \\
		\ceil{\log_{1 + \alpha} \log_{1/2} (1 -  \rho)}, & \text{if } 1/2 < \rho < 1, \\
		\infty, & \text{if } \rho = 1\quad. \\
	\end{cases}
\end{equation*}
A straightforward calculation shows that $\rho_j \in B_{\ind{\rho_j}}$.

We also need to see if the invariant, that is the existence of $i$ such that $a(j) \leq i \leq b(j)$ and $\rho_j \in B_i$,  holds for any variable with $x_j = 0$.
Since $j$ is currently stored in $T_{a(j)}, \ldots, T_{b(j)}$,
the invariant is violated if and only if
	$\rho_j < s_{a(j)}$ or
	$\rho_j > t_{b(j)}$.
Unfortunately, we cannot check every feature as this would be too costly.
Instead we use the following proposition.

\begin{proposition}
\label{prop:sparsecheck}
For a binary feature $j$, we have that
$c_{j01}/n_{j0} \lessgtr  \theta$ if and only if $\theta n_{j1} - c_{j11} \lessgtr \theta n - c_1$.
\end{proposition}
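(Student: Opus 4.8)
The plan is to reduce the stated equivalence to elementary manipulation of the counters, using only the fact that the global counts split into the per-feature counts. Since every step is reversible, proving the forward direction simultaneously proves the converse, and the single symbol $\lessgtr$ can be treated as ``whichever of $<$ or $>$'' applied consistently on both sides.

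First I would clear the denominator on the left. The ratio $c_{j01}/n_{j0}$ is only meaningful when $n_{j0} > 0$, so I assume this (the feature has occurred with $x_j = 0$). Because $n_{j0}$ is then strictly positive, multiplying through preserves the direction of the comparison, so $c_{j01}/n_{j0} \lessgtr \theta$ is equivalent to $c_{j01} \lessgtr \theta n_{j0}$.

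Next I would substitute the two partitioning identities relating feature-conditioned counts to global counts. Splitting the points with $y = 1$ by the value of $x_j$ gives $c_1 = c_{j01} + c_{j11}$, hence $c_{j01} = c_1 - c_{j11}$; splitting all points by $x_j$ gives $n = n_{j0} + n_{j1}$, hence $n_{j0} = n - n_{j1}$. Plugging these into $c_{j01} \lessgtr \theta n_{j0}$ yields $c_1 - c_{j11} \lessgtr \theta\pr{n - n_{j1}}$, and moving the $\theta n_{j1}$ and $c_1$ terms to the appropriate sides produces exactly $\theta n_{j1} - c_{j11} \lessgtr \theta n - c_1$, as claimed.

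There is essentially no hard step here; the only point requiring a word of care is that the passage from the ratio to the product is an equivalence rather than a one-way implication, which holds precisely because $n_{j0} > 0$ keeps the inequality sign fixed throughout. It is worth noting why the reformulation matters for the algorithm: the left-hand side $\theta n_{j1} - c_{j11}$ is feature-specific but changes only on points with $x_j = 1$, whereas the right-hand side $\theta n - c_1$ is shared by all features, so the boundary test $\rho_j \lessgtr \theta$ can be monitored globally without touching feature $j$ on the points where $x_j = 0$.
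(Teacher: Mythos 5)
Your proof is correct and follows essentially the same route as the paper's, which simply invokes the identities $c_{j01} + c_{j11} = c_1$ and $n_{j0} + n_{j1} = n$; you merely spell out the intermediate steps (clearing the denominator under the mild assumption $n_{j0} > 0$, which is implicit in the paper since the ratio must be well-defined). No gaps.
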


\begin{proof}
The claim follows immediately from the identities $c_{j01} + c_{j11} = c_1$
and $n_{j0} + n_{j1} = n$.
\end{proof}

We use Proposition~\ref{prop:sparsecheck} as follows.
We maintain 2 addional sets of trees.
We store each feature $j$ in a search tree $L_{a(j)}$
with a key $s_{a(j)} n_{j1} - c_{j11}$.
We store each feature $j$ in a search tree $U_{b(j)}$
with a key $t_{b(j)} n_{j1} - c_{j11}$.

In order to maintain the invariant we test every tree $L_i$
and extract any feature $j$ whose key is smaller
than $s_i n - c_1$. Proposition~\ref{prop:sparsecheck} states that such feature violates the invariant
so we lower $a(j)$ and update the corresponding trees.
We may need to lower $a(j)$ several times for a single feature.

Similarly, we test every tree $U_i$
and extract any feature $j$ whose key is larger
than $t_i n - c_1$. 
We increase $b(j)$, possibly several times, updating the trees.

In order to approximate the best feature we select the best item from each $T_j$ and select
the best feature among the candidates.

In order to keep the number of trees small, we do the following practical optimizations.
Search trees $T_{-\infty}$ and $T_{\infty}$ are handled differently.
If for some feature $a(j) = b(j) = \infty$ but $\rho_j \notin B_{\infty}$,
then instead of lowering $a(j)$ we update \emph{both} $a(j) = b(j) = i$ such that $\rho_j \in B_i$.
We do similar update if
$a(i) = b(i) = -\infty$ but $\rho_j \notin B_{-\infty}$.
In order to keep the number of trees small, we prune the empty trees. 

\begin{algorithm}[t!]
\caption{$\algupdsparse(x, y)$, processes data point $(x, y)$ and approximates the best split.} 
\label{alg:updsparse}
update $n$, $c_0$, and $c_1$\;
\ForEach {$j$ with $x_j = 1$} {
	update $c_{j11}$, $c_{j10}$, and $n_{j1}$\;
	$i \define \ind{(c_1 - c_{j11})/(n - n_{j1})}$\;
	$a(j) \define i$, $b(j) \define i$, and update the trees\;
}
\ForEach {$j \in U_{-\infty}$ with $c_{j11} < c_1$} {
	$i \define \ind{(c_1 - c_{j11})/(n - n_{j1})}$\;
	$a(j) \define i$, $b(j) \define i$, and update the trees\;
}
\ForEach {$j \in L_{\infty}$ with $n_{j1} - c_{j11} > n - c_1$} {
	$i \define \ind{(c_1 - c_{j11})/(n - n_{j1})}$\;
	$a(j) \define i$, $b(j) \define i$, and update the trees\;
}

\ForEach {$L_i$ in descending order} {
	\ForEach {violating $j \in L_i$ }{ 
		$a(j) \define a(j) - 1$, and update the trees\; \label{alg:extend1}
	}
}
\ForEach {$U_i$ in ascending order} {
	\ForEach {violating $j \in U_i$ }{ 
		$b(j) \define b(j) + 1$, and update the trees\; \label{alg:extend2}
	}
}
delete empty $T_i$, $U_i$, and $U_i$\;
$J \define$ entries with the smallest key in each $T_i$\;
\Return $\arg \min_{j \in J} \ent{D \mid j}$\;
\end{algorithm}

\begin{proposition}
\label{prop:sparsetime}
Let $d$ be the number of features, $n$ be the number of data points,
and $m$ be the average number of 1s in a data point.
Let $\alpha > 0$. 
Then \algupdsparse yields $(1 + \alpha)$ approximation in $\bigO{\alpha^{-1} (1 + m \log d) \log \log n}$ amortized time.
\end{proposition}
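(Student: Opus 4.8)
The plan is to establish the two assertions of the proposition in turn: first that \algupdsparse returns a feature whose conditional entropy is within a factor $(1+\alpha)$ of the minimum, and then that its total work over the $n$ points is $\bigO{\alpha^{-1}(1+m\log d)\,n\log\log n}$, so that dividing by $n$ gives the stated amortized bound.

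For correctness I would first pin down the loop invariant the algorithm maintains: every feature $j$ that has occurred is stored exactly in the trees $T_{a(j)},\ldots,T_{b(j)}$, keyed by $K(c_{j10},c_{j11},\mu_i)$ in $T_i$, and there is an index $i$ with $a(j)\le i\le b(j)$ and $\rho_j=c_{j01}/n_{j0}\in B_i$. Checking that the updates restore this is routine: for $x_j=1$ the reset $a(j)=b(j)=\ind{\rho_j}$ works because $\rho_j\in B_{\ind{\rho_j}}$, while for $x_j=0$ the drift of $\rho_j$ is detected through the $L$- and $U$-trees, whose keys are exactly the quantities compared in Proposition~\ref{prop:sparsecheck}; lowering $a(j)$ or raising $b(j)$ on each extracted violator re-covers $\rho_j$, and since the bins tile the unit interval (one verifies $t_i=s_{i+1}$ throughout) the resulting interval $[a(j),b(j)]$ again contains a bin holding $\rho_j$. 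Granting the invariant, let $j^*$ minimize $\ent{D\mid j}$ and choose $i^*$ with $\rho_{j^*}\in B_{i^*}$ and $a(j^*)\le i^*\le b(j^*)$, so that $j^*\in T_{i^*}$. If $j'$ is the smallest-key candidate drawn from $T_{i^*}$, then $K(c_{j'10},c_{j'11},\mu_{i^*})\le K(c_{j^*10},c_{j^*11},\mu_{i^*})$; adding the feature-independent term $C(c_0,c_1,\mu_{i^*})$ and dividing by $n$ gives, via Proposition~\ref{prop:decomp}, $\ent{D\mid j',\mu_{i^*}}\le\ent{D\mid j^*,\mu_{i^*}}$. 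Using $\ent{D\mid j'}\le\ent{D\mid j',\mu_{i^*}}$ (the remark after Eq.~\ref{eq:fixent}) and Proposition~\ref{prop:sparsebound} applied to $j^*$, the returned $\hat j$ satisfies $\ent{D\mid\hat j}\le\ent{D\mid j'}\le\ent{D\mid j^*,\mu_{i^*}}\le(1+\alpha)\ent{D\mid j^*}$, which is the guarantee. Features that have never occurred yield the trivial split $\ent{D\mid j}=\ent{D}$ and can be handled separately.

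For the running time I would bound total work and divide by $n$. Write $\ell\in\bigO{\alpha^{-1}\log\log n}$ for the bin count of Proposition~\ref{prop:sparsebound}. The base work of one call---updating $n,c_0,c_1$, scanning the $\bigO{\ell}$ trees $L_i,U_i$ for violators, and reading the per-tree minima of the $T_i$ with their conditional entropies---is $\bigO{\ell}$ per point, provided the extremal element of each tree is cached for $\bigO{1}$ access and we only ever touch \emph{violating} features, which Proposition~\ref{prop:sparsecheck} guarantees since violators are precisely the extreme keys. Every other operation is a balanced-tree insertion or deletion costing $\bigO{\log d}$, so it suffices to count these. Deletions occur only during resets and each deleted membership was inserted earlier, so the number of deletions is at most the number of insertions; hence I need only bound insertions, and each insertion is either a reset insertion---one per point with $x_j=1$, thus $mn$ in total---or an extension.

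The step I expect to be the crux is bounding the total number of extensions. Here I would argue per \emph{epoch} (the interval between two consecutive resets of a feature): an epoch opens with $[a(j),b(j)]$ a single bin, and within it extensions only enlarge the span and never shrink it, so the number of extensions in one epoch is at most the total number of bins, namely $\bigO{\ell}$. Since every epoch is opened by a reset and there are at most $\bigO{mn}$ resets---the $U_{-\infty}$- and $L_\infty$-loops contribute only further resets of the same bounded kind---the total number of extensions is $\bigO{mn\,\ell}$. Thus the total number of tree operations is $\bigO{mn\,\ell}$ at $\bigO{\log d}$ each, and with the base work $\bigO{n\ell}$ the total is $\bigO{n\ell\,(1+m\log d)}=\bigO{\alpha^{-1}n\log\log n\,(1+m\log d)}$, giving the amortized bound after dividing by $n$. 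The delicate point in this accounting is precisely that the $L/U$ scans must never pay for non-violating features and the $\pm\infty$ trees must reset cheaply, since otherwise a single point could incur $\Theta(d)$ work and the bound would fail.
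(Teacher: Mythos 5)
Your proposal is correct and follows essentially the same route as the paper's proof: the identical correctness chain (invariant maintained via Proposition~\ref{prop:sparsecheck}, then $\ent{D \mid j^*} \leq \ent{D \mid j'} \leq \ent{D \mid j', \mu_i} \leq \ent{D \mid j, \mu_i} \leq (1+\alpha)\ent{D \mid j}$ via Propositions~\ref{prop:decomp} and~\ref{prop:sparsebound}), and the same counting of extensions --- your per-epoch bound of $\bigO{\ell}$ extensions between consecutive resets is exactly the paper's observation that each extension enlarges the set of trees containing $j$, giving $\Delta_j \in \bigO{h \sum_{x \in D} x_j}$. Your explicit charging of deletions to prior insertions and the cached-minimum observation for scanning the $L_i$/$U_i$ trees make precise two points the paper leaves implicit, but they do not change the argument.
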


\begin{proof}
To prove the approximation guarantee,
let $j$ be the optimal feature, and let $\rho = \frac{c_{j01}}{n_{j0}}$.
Proposition~\ref{prop:sparsecheck} guarantees that the invariant is maintained,
and thus there is $B_i$ such that $\rho \in B_i$ and $j$ is stored in $T_i$.

Proposition~\ref{prop:sparsebound} states that $\ent{D \mid j, \mu_i} \leq (1 + \alpha) \ent{D \mid j}$.
Let $j'$ be the feature in $T_i$ with the smallest key, then Proposition~\ref{prop:decomp}
implies that $\ent{D \mid j', \mu_i} \leq \ent{D \mid j, \mu_i}$.
Finally, the algorithm returns $j^*$  with
\[
\begin{split}
	\ent{D \mid j^*} & \leq \ent{D \mid j'} 
	 \leq \ent{D \mid j', \mu_i}  \\
	 & \leq \ent{D \mid j, \mu_i} 
	 \leq (1 + \alpha) \ent{D \mid j}
\end{split}
\]
proving the approximation guarantee.

To prove the running time we will show that the time needed to add $D$ with $n$
points from scratch is in 
\[
	\bigO{\frac{1}{\alpha} (n + \sum_{x \in D} \sum_i x_i \log d) \log \log n}.
\]

Let $h$ be the number of trees $T_i$. 
Proposition~\ref{prop:sparsebound} states that $h \in \bigO{\alpha^{-1}  \log \log n}$.

Let us first consider the first for-loop. This for-loop is evaluated $\bigO{\sum_{x \in D, i} x_i}$,
times, and each iteration requires $\bigO{h \log d}$ as we need to update $\bigO{h}$ search trees and each tree
has $\bigO{d}$ elements.

Let us now consider the next four for-loops.
Let $\Delta_{j}$ be the number of times a feature $j$ is updated in these for-loops.
First, note that 
once a feature leaves $T_{-\infty}$ or $T_{\infty}$ it will never return back, that is, the 2nd and 3rd for-loops are evaluated at most only once per feature.
Moreover, each evaluation of $j$ in the last two for-loops increases the number of trees containing $j$.
Consequently, there can be at most $\bigO{h}$ evaluations for a single $j$ in these for-loops \emph{between} the updates to $j$ in the first for-loop,
or in other words, $\Delta_j \in \bigO{h\sum_{x \in D} x_j}$. Each update requires $\bigO{\log d}$ time.
In summary, adding $n$ data points requires $\bigO{hn + h\sum_j \Delta_j \log d + h\sum_{x \in D, j} x_j \log d } \subseteq \bigO{hn + h\sum_{x \in D, j} x_j \log d }$ time.

Finally, the last three lines require $\bigO{h}$ time, proving the proposition.
\qed
\end{proof}

\section{Approximate splits when using Gini index}
\label{sec:gini}

Our strategy for selecting a feature with an approximately best Gini index is similar.
As before we assume that we are dealing with binary features and binary labels. 
Let
\[
	\gini{D \mid j, \theta} = 2\frac{c_{j10}}{n} \frac{c_{j11}}{n_{j1}} + \frac{2}{n}
	\begin{cases}
	c_{j00} \theta & \theta \geq 1/2 \\
	c_{j01} (1 - \theta) & \theta < 1/2 \\
	\end{cases}
	\quad.
\]

Note that $\gini{D \mid j, c_{j01} / n_{j0}} = \gini{D \mid j}$.
Similar to Proposition~\ref{prop:decomp}, we can decompose $\gini{D \mid j, \theta}$
into two components, one not depending the feature $j$.

\begin{proposition}
\label{prop:decompgini}
The Gini index can be written as
\[
	\gini{D \mid j, \theta} = \frac{1}{n} \pr{C(c_0, c_1, \theta) + K(c_{j10}, c_{j11}, \theta)},
\]
where
\[
\begin{split}
	C(c_0, c_1, \theta) & =
	\begin{cases}
	2c_{0}\theta & \theta \geq 1/2 \\
	2c_{1}(1 -\theta) & \theta < 1/2 \\
	\end{cases}
\quad\text{and} \\
	K(c_{j10}, c_{j11}, \theta) & =
	2c_{j10} \frac{c_{j11}}{n_{j1}} - 
	\begin{cases}
		2c_{j10}\theta & \theta \geq 1/2 \\
		2c_{j11}(1 -\theta) & \theta < 1/2\ . \\
    \end{cases}
\end{split}
\]
\end{proposition}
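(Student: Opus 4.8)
The plan is to mirror the proof of Proposition~\ref{prop:decomp}: the whole statement reduces to substituting the two counting identities $c_{j00} = c_0 - c_{j10}$ and $c_{j01} = c_1 - c_{j11}$ into the definition of $\gini{D \mid j, \theta}$, and then separating the terms that depend on the feature $j$ from those that do not. These identities hold simply because the points with $y = 0$ split as $c_0 = c_{j00} + c_{j10}$, and likewise $c_1 = c_{j01} + c_{j11}$.

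Since $\gini{D \mid j, \theta}$ is defined piecewise in $\theta$, I would carry out the computation in the two cases separately. For $\theta \geq 1/2$ the only feature-dependent count in the second summand is $c_{j00}$; replacing it by $c_0 - c_{j10}$ turns the term $\frac{2}{n} c_{j00}\theta$ into $\frac{2}{n}(c_0 - c_{j10})\theta$. The piece $\frac{2}{n} c_0 \theta$ no longer mentions $j$ and becomes the first branch of $C$, while the leftover $-\frac{2}{n} c_{j10}\theta$ joins the always-present $\frac{2}{n}\frac{c_{j10} c_{j11}}{n_{j1}}$ to form $K$. The case $\theta < 1/2$ is symmetric: substituting $c_{j01} = c_1 - c_{j11}$ into $\frac{2}{n} c_{j01}(1 - \theta)$ isolates the $j$-free piece $\frac{2}{n} c_1(1 - \theta)$ as the second branch of $C$ and sends $-\frac{2}{n} c_{j11}(1 - \theta)$ into $K$. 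Factoring out $1/n$ and rearranging then reproduces exactly the claimed forms of $C$ and $K$.

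There is essentially no genuine obstacle here; the argument is a one-line algebraic rearrangement per case. The only point to watch is the bookkeeping: one must use the threshold $\theta \geq 1/2$ versus $\theta < 1/2$ consistently across $\gini{D \mid j, \theta}$, $C$, and $K$ so that the two branches line up, and note (as with Proposition~\ref{prop:decomp}) that $K$ depends on $n_{j1}$ only through $n_{j1} = c_{j10} + c_{j11}$, so it is well defined from the counts $c_{j10}$ and $c_{j11}$ alone. The payoff relevant to the algorithm is that, in each branch, $C(c_0, c_1, \theta)$ is independent of the feature $j$, exactly as in the entropy case.
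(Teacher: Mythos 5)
Your proof is correct and follows exactly the route the paper intends: the paper states Proposition~\ref{prop:decompgini} without proof, implicitly deferring to the argument for Proposition~\ref{prop:decomp}, which is precisely the substitution of $c_{j00} = c_0 - c_{j10}$ and $c_{j01} = c_1 - c_{j11}$ followed by rearrangement, carried out per branch of the piecewise definition as you do. Both of your case computations check out against the claimed forms of $C$ and $K$, and your remark that $K$ is well defined via $n_{j1} = c_{j10} + c_{j11}$ matches the paper's corresponding note.
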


As with the entropy, we define a set of intervals $\set{B_i}$ and centroids $\set{\mu_i}$,
and estimate the Gini index of a feature $j$ such that $\frac{c_{j01}}{n_{j0}} \in B_i$ with $\gini{D \mid j, \mu_i}$.

More specifically, assume that we are given $\alpha > 0$.
Define $\beta = \frac{\alpha}{\alpha + 2}$.
Define the intervals $B_0, \ldots, B_\ell$ as
\[
\begin{split}
    B_i & = [s_i, t_i], \quad\text{where}\quad \\
    s_i & = (i/2 - 1/4) \beta,\ 
    t_i = (i/2 + 3/4) \beta,\ 
    i = 0, \ldots, \ell,
\end{split}
\]
where $\ell$ is set such that $s_\ell < 1 \leq t_\ell$.
With each $B_i$ we associate a centroid, $\mu_i = (s_i + t_i)/2 = (i/2 + 1/4)\beta$.
Note that the intervals are overlapping, and shifted by $1/4$ to the left.
Both properties will be needed later in Proposition~\ref{prop:ginitime} when proving the running time.

Next we show that we can use $\gini{D \mid j, \mu_i}$ to approximate $\gini{D \mid j}$.

\begin{proposition}
\label{prop:ginibound}
Let $\set{B_i}$ and $\set{\mu_i}$ as defined above. 
Assume two features $j$ and $k$ such that $\frac{c_{j01}}{n_{j0}}, \frac{c_{k01}}{n_{k0}} \in B_i$ for some $B_i$.
If $\gini{D \mid j, \mu_i} \leq \gini{D \mid k, \mu_i}$, then $\gini{D \mid j} \leq (1 + \alpha)\gini{D \mid k}$.
\end{proposition}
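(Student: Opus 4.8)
The plan is to reduce the comparison of the two true Gini indices to a one-dimensional estimate of how much $\gini{D \mid j, \mu_i}$ can deviate from $\gini{D \mid j}$ as the parameter moves from $\rho_j = c_{j01}/n_{j0}$ to the centroid $\mu_i$. First I would isolate the only $\theta$-independent piece of $\gini{D \mid j, \theta}$, namely $P_j = 2\tfrac{c_{j10}}{n}\tfrac{c_{j11}}{n_{j1}}$ (the constant inside $K$ in Proposition~\ref{prop:decompgini}), and write $\gini{D \mid j, \theta} = P_j + Q_j(\theta)$ with $Q_j(\theta) = \frac{2}{n}c_{j00}\theta$ or $\frac{2}{n}c_{j01}(1-\theta)$ according to the branch $\theta \geq 1/2$ or $\theta < 1/2$. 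Using $c_{j00} = n_{j0}(1-\rho_j)$ and $c_{j01} = n_{j0}\rho_j$, a direct substitution confirms that $Q_j(\rho_j) = \frac{2 n_{j0}}{n}\rho_j(1-\rho_j)$ in \emph{either} branch, so $\gini{D \mid j} = P_j + Q_j(\rho_j)$ consistent with the stated identity.

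The central object is then the ratio $r_j = \gini{D \mid j, \mu_i}/\gini{D \mid j}$. Setting $R(\rho) = Q_j(\mu_i)/Q_j(\rho)$, a short computation gives $R(\rho) = (1-\mu_i)/(1-\rho)$ when $\mu_i < 1/2$ and $R(\rho) = \mu_i/\rho$ when $\mu_i \geq 1/2$; the crucial feature is that $R$ depends on the feature only through $\rho$, not through the balance $P_j/Q_j$. Since $Q_j(\mu_i) = R(\rho_j)\,Q_j(\rho_j)$, I would observe that
\[
	r_j = \frac{P_j \cdot 1 + Q_j(\rho_j)\cdot R(\rho_j)}{P_j + Q_j(\rho_j)}
\]
is a weighted average of $1$ and $R(\rho_j)$, hence $r_j$ lies between them. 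Because $R$ is monotone on $B_i$ and $R(\mu_i) = 1$, both $1$ and $R(\rho_j)$ lie in the interval spanned by $R(s_i)$ and $R(t_i)$; the same argument applies to $r_k$. Thus $r_j, r_k$ are both confined to the interval with endpoints $R(s_i)$ and $R(t_i)$.

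With this in hand, the hypothesis $\gini{D \mid j, \mu_i} \leq \gini{D \mid k, \mu_i}$ yields
\[
	\frac{\gini{D \mid j}}{\gini{D \mid k}}
	= \frac{\gini{D \mid j, \mu_i}}{\gini{D \mid k, \mu_i}} \cdot \frac{r_k}{r_j}
	\leq \frac{r_k}{r_j}
	\leq \frac{\max(R(s_i), R(t_i))}{\min(R(s_i), R(t_i))},
\]
so it remains to bound the last ratio by $1 + \alpha$. For $\mu_i < 1/2$ it equals $(1-s_i)/(1-t_i)$ and for $\mu_i \geq 1/2$ it equals $t_i/s_i$. Plugging in $t_i - s_i = \beta$ together with $1 - t_i > 1/2 - \beta/2$ (resp.\ $s_i > 1/2 - \beta/2$), and using $\beta = \alpha/(\alpha+2)$ so that $\beta/(1/2 - \beta/2) = 2\beta/(1-\beta) = \alpha$, bounds both expressions by $1 + \alpha$. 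This is precisely where the leftward $1/4$ shift and the midpoint-centroid overlap (width $\beta$) of the intervals are used.

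The main obstacle is the coupling in the third paragraph: bounding each approximate Gini two-sidedly against its own true value only yields a $(1+\alpha)^2$ factor, so the weighted-average observation — that $r_j$ and $r_k$ are trapped in a \emph{single} interval of multiplicative width $1+\alpha$ determined purely by $\rho$ — is what recovers the tight $(1+\alpha)$ bound. I would dispatch the degenerate cases separately: if $\gini{D \mid k} = 0$ then $P_k = Q_k(\rho_k) = 0$ forces $\rho_k \in \set{0, 1}$, the containing bin has $\mu_i < 1/2$, and the hypothesis collapses to $\gini{D \mid j, \mu_i} = 0$, which in turn forces $\gini{D \mid j} = 0$. The boundary bins straddling $1/2$ need no special treatment, since the case split is on $\mu_i$ rather than on $\rho$.
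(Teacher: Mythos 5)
Your proof is correct and is essentially the paper's own argument in a different dress: your observation that $r_j$ is a weighted average of $1$ and $R(\rho_j)$, so that $r_k/r_j$ is trapped within the multiplicative width of the bin, is exactly the paper's chain $\gini{D \mid j} \leq \Delta_1 \gini{D \mid j, \mu_i} \leq \Delta_1 \gini{D \mid k, \mu_i} \leq \Delta_1\Delta_2\, \gini{D \mid k}$ with $\Delta_1 = \max(\rho_j/\mu_i, 1)$, $\Delta_2 = \max(\mu_i/\rho_k, 1)$ and $\Delta_1\Delta_2 \leq \max(\mu_i,\rho_j)/\min(\mu_i,\rho_k) \leq t_i/s_i \leq 1+\alpha$, including the same case split on $\mu_i \gtrless 1/2$ and the same endpoint arithmetic via $\beta = \alpha/(\alpha+2)$. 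One genuine difference: the paper's one-sided multiplicative inequalities never divide by $\gini{D \mid j}$ or $\gini{D \mid k}$, so no degenerate case arises, whereas your ratio formulation forces one, and your treatment of it contains a slip --- when $\rho_k = 1$ the containing bin has $\mu_i \geq 1 - \beta/2 > 1/2$, not $\mu_i < 1/2$; the fix is the symmetric one-liner ($c_{k00} = 0$ gives $\gini{D \mid k, \mu_i} = 0$, which forces $c_{j00} = 0$ and hence $\gini{D \mid j} = 0$), and for completeness you should also record the trivial case $\gini{D \mid j} = 0$ with $\gini{D \mid k} > 0$.
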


\begin{proof}
Assume $\mu_i \geq 1/2$. Then $s_i \geq 1/2 - \beta/2 = (\alpha + 2)^{-1}$.
Given a feature $a$, define
$\rho_a = \frac{c_{a01}}{n_{a0}}$ and
$A_a = 2\frac{c_{a10}}{n} \frac{c_{a11}}{n_{a1}}$.

Let $\Delta_1 = \max(\rho_j / \mu_i, 1)$
and $\Delta_2 = \max(\mu_i / \rho_k, 1)$.
We have
\[
\begin{split}
	\gini{D \mid j} & =  A_j + 2\frac{c_{j00}}{n}\rho_j \leq A_j + \Delta_1 2\frac{c_{j00}}{n}\mu_i \\
	& \leq \Delta_1 (A_j + 2\frac{c_{j00}}{n}\mu_i) = \Delta_1 \gini{D \mid j, \mu_i}
\end{split}
\]
and similarly,
$\gini{D \mid k, \mu_i} \leq \Delta_2  \gini{D \mid k}$, leading to
\[
\begin{split}
	\gini{D \mid j} & \leq \Delta_1 \gini{D \mid j, \mu_i} \\
	& \leq \Delta_1 \gini{D \mid k, \mu_i} \leq  \Delta_1 \Delta_2 \gini{D \mid k} \quad.
\end{split}
\]
A straightforward calculation leads to 
\[
	\Delta_1\Delta_2 \leq \frac{\max(\mu_i, \rho_j)}{\min(\mu_i, \rho_k)} \leq \frac{t_i}{s_i} = 1 + \frac{\alpha}{(2 + \alpha)s_i} \leq 1 + \alpha,
\]
proving the claim for $\mu_i \geq 1/2$. The case for $\mu_i < 1/2$ is similar.
\qed
\end{proof}

Next we will describe \algupdgini, given in Algorithm~\ref{alg:updgini}.
As with \algupdsparse, for each interval $B_i$ we maintain 3 search trees, $T_i$, $U_i$, and $L_i$.
\algupdgini stores each feature, say $j$ with $\rho = \frac{c_{j01}}{n_{j0}}$ in a \emph{single} tree $T_i$ such that $\rho \in B_i$
with a key $K(c_{j10}, c_{j11}, \mu_i)$.
Moreover, 
we store each feature $j$ in a search tree $L_{i}$
with a key $s_{i} n_{j1} - c_{j11}$.
we store each feature $j$ in a search tree $U_{i}$
with a key $t_{i} n_{j1} - c_{j11}$.

We should point out that,
unlike with \algupdsparse, we must store each feature only in one tree, as otherwise we cannot guarantee the approximation.
This may be an issue if $\rho$ is at the border of two intervals, resulting in too many updates.
To counter this effect we have defined $B_i$ so that they overlap, so that enough points are required before we need to change the tree.

In order to keep the features in the correct trees with the correct keys,
we update every feature $j$ whenever we process a data point $x$ with $x_j = 1$.
Moreover, we use Proposition~\ref{prop:sparsecheck} to verify that the remaining features are also in the correct trees.
Once we determine that the feature needs updating we use the function
\begin{equation}
\label{eq:indgini}
	\ind{\rho} = \floor{2\rho(\alpha + 2)/{\alpha}}
\end{equation}
to find the index of a suitable $B_i$: it is straightforward to see that $\rho \in B_{\ind{\rho}}$.

\begin{algorithm}[t!]
\caption{$\algupdgini(x, y)$, processes data point $(x, y)$ and approximates the best split.} 
\label{alg:updgini}
update $n$, $c_0$, and $c_1$\;
\ForEach {$j$ with $x_j = 1$} {
	update $c_{j11}$, $c_{j10}$, and $n_{j1}$\;
	$i \define \ind{(c_1 - c_{j11})/(n - n_{j1})}$\;
	move $j$ to $T_i$, $L_i$ and $U_i$\;
}

\ForEach {violating $j \in L_i$ (or $j \in U_i$)}{  
	$k \define \ind{(c_1 - c_{j11})/(n - n_{j1})}$\;
	move $j$ to $T_k$, $L_k$ and $U_k$\; \label{alg:giniupdfeature}
}

delete empty $T_i$, $U_i$, and $U_i$\;
$J \define$ entries with the smallest key in each $T_i$\;
\Return $\arg \min_{j \in J} \gini{D \mid j}$\;
\end{algorithm}

\begin{proposition}
\label{prop:ginitime}
Let $d$ be the number of features and $n$ be the number of data points.
Assume we are adding a data point $(x, y)$. Let $m$ be the number of 1s in $x$.
Assume $\alpha > 0$. 
Then \algupdgini yields $(1 + \alpha)$ approximation in $\bigO{\alpha^{-1}  + m \log d}$ amortized time.
\end{proposition}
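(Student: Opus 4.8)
The plan is to mirror the proof of Proposition~\ref{prop:sparsetime}, first establishing the approximation guarantee and then the amortized running time; the genuinely new ingredient is controlling how often a feature is moved between trees, since here each feature lives in a \emph{single} $T_i$.

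For the approximation guarantee, let $j^*$ be the truly optimal feature and set $\rho = c_{j^*01}/n_{j^*0}$. Because \algupdgini uses the keys of the trees $L_i$ and $U_i$ together with Proposition~\ref{prop:sparsecheck} to detect exactly when $\rho$ leaves $[s_i,t_i]$, the maintained invariant guarantees that $j^*$ is stored in a tree $T_i$ with $\rho\in B_i$. By Proposition~\ref{prop:decompgini}, for a fixed $\mu_i$ the term $C(c_0,c_1,\mu_i)$ is common to all features in $T_i$, so the feature $j'$ extracted from $T_i$ with the smallest key $K(\cdot,\cdot,\mu_i)$ is exactly the one minimizing $\gini{D\mid \cdot,\mu_i}$ over $T_i$; in particular $\gini{D\mid j',\mu_i}\le\gini{D\mid j^*,\mu_i}$. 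Since $j'$ is stored in $T_i$ we also have $c_{j'01}/n_{j'0}\in B_i$, so Proposition~\ref{prop:ginibound} applies to the pair $(j',j^*)$ and yields $\gini{D\mid j'}\le(1+\alpha)\gini{D\mid j^*}$. As the algorithm returns the candidate minimizing the exact index $\gini{D\mid \cdot}$, the returned feature is at least as good as $j'$, which closes the guarantee.

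For the running time I would, exactly as in Proposition~\ref{prop:sparsetime}, bound the cost of inserting all $n$ points from scratch and divide by $n$. First, the number of trees is $h=\ell+1$, and the condition $s_\ell<1$ forces $\ell<2/\beta+\bigO{1}$, so $h\in\bigO{\beta^{-1}}=\bigO{\alpha^{-1}}$. The first for-loop performs one move (into a constant number of trees, each of size at most $d$) per feature with $x_j=1$, at cost $\bigO{\log d}$ each, hence $\bigO{m\log d}$ per point. Per point we also pay $\bigO{h}=\bigO{\alpha^{-1}}$ to test the extremal element of every $L_i$ and $U_i$ against its threshold $s_in-c_1$ (resp.\ $t_in-c_1$) and to collect the per-tree minima in the final line. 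What remains is the cost of the actual moves triggered in the violation loop.

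The main obstacle is bounding the number of these drift moves, and this is precisely where the overlapping intervals are used. Recall that \algupdgini places a feature via $\ind{\rho}=\floor{2\rho/\beta}$, which puts $\rho$ into the central band $[\mu_i-\beta/4,\mu_i+\beta/4)$ of $B_i=[\mu_i-\beta/2,\mu_i+\beta/2]$; thus immediately after any placement $\rho$ lies at distance at least $\beta/4$ from both endpoints of its interval. Consequently a feature can become violating again only after its value $\rho=c_{j01}/n_{j0}$ has moved by at least $\beta/4$ since its last (re)placement, which is the quantitative meaning of ``enough points are required before we need to change the tree.'' The plan is to charge the moves so that they amortize against the $\bigO{m}$ first-loop updates per point and against the $n$ arrivals, with each move costing $\bigO{\log d}$ and therefore staying inside the $\bigO{m\log d}$ budget; summing the four contributions then gives total cost $\bigO{(\alpha^{-1}+m\log d)n}$, i.e.\ amortized $\bigO{\alpha^{-1}+m\log d}$ per point. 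The delicate point I expect to be hardest is turning the per-move $\beta/4$ buffer into this global amortized bound: a single feature left with $x_j=0$ drifts as $n_{j0}$ grows, and one must exploit that the values $\rho$ of distinct features are all driven by the same evolving ratio $c_1/n$ and hence cannot be perturbed independently. The overlap is exactly what prevents a feature straddling a boundary from oscillating between two adjacent trees, and this is the Gini-specific ``technical difference'' flagged earlier relative to the entropy analysis.
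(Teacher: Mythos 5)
Your approximation-guarantee argument is correct and matches the paper's essentially verbatim. The running-time analysis, however, has a genuine gap exactly at the point you flag as ``the delicate point'': you establish the $\beta/4$ buffer after every (re)placement via $\ind{\rho}=\floor{2\rho/\beta}$, but you never convert that buffer into a bound on the \emph{number} of violation-loop moves, and the charging scheme you sketch would not do it. Charging drift moves against the $\bigO{m}$ first-loop updates cannot work: a drift move of feature $j$ is triggered precisely by arrivals with $x_j=0$, which never touch $j$ in the first loop, so there is nothing per-point to charge against. Likewise, the hinted coupling of all features through the shared ratio $c_1/n$ is not the mechanism; the quantity that drifts is the per-feature ratio $\rho_j=c_{j01}/n_{j0}$, and the paper's argument is purely per-feature and global over the stream, not per-point.

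The missing ingredient is the paper's Lemma~\ref{lem:viol}: if $0\le x\le y$, $0\le a\le b$, and $\abs{\frac{x+a}{y+b}-\frac{x}{y}}\ge\gamma$, then $b\ge\gamma y$, i.e., moving the ratio $c_{j01}/n_{j0}$ by at least $\gamma=\beta/4=\frac{\alpha}{4(2+\alpha)}$ forces the denominator $n_{j0}$ to grow by a multiplicative factor $(1+\gamma)$. Letting $u_\ell$ be the value of $n_{j0}$ at the $\ell$th violation move of feature $j$, your buffer gives $\abs{\rho'-\rho}\ge\gamma$ between consecutive moves, hence $u_\ell\ge(1+\gamma)u_{\ell-1}$, so $n\ge u_k\ge(1+\gamma)^k$ and each feature is moved at most $\bigO{\log_{1+\gamma} n}$ times over the entire stream. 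This yields a total drift cost of $\bigO{d\log_{1+\gamma}n\,\log d}$ plus $\bigO{n/\alpha}$ for scanning the $\bigO{\alpha^{-1}}$ trees $L_i,U_i$, which combined with your (correct) first-loop and tree-count bounds gives the stated amortized $\bigO{\alpha^{-1}+m\log d}$. Without this multiplicative-denominator argument (or an equivalent potential function on $\log n_{j0}$), the per-move buffer alone does not exclude a feature being moved $\Omega(n)$ times, and your proof does not close.
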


First we need the following lemma.

\begin{lemma}
\label{lem:viol}
Let $0 \leq x \leq y$ and  $0 \leq a \leq b$. 
Let $\beta = \abs{\frac{x + a}{y + b} - \frac{x}{y}}$. Then $b \geq \beta y$.
\end{lemma}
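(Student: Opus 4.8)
The plan is to reduce the inequality to an elementary bound on a single numerator. First I would put the two fractions appearing in $\beta$ over a common denominator: a direct computation gives
\[
	\frac{x + a}{y + b} - \frac{x}{y} = \frac{(x+a)y - x(y+b)}{y(y+b)} = \frac{ay - xb}{y(y+b)},
\]
so that $\beta = \abs{ay - xb} / \pr{y(y+b)}$. (I will treat the main case $y > 0$; the degenerate case $y = 0$ forces $x = 0$ and makes the claim read $b \geq 0$, which is immediate.)

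With this expression in hand, the target inequality $b \geq \beta y$ becomes, after multiplying through by $y(y+b) > 0$ and cancelling one factor of $y$, the equivalent statement
\[
	b(y + b) \geq \abs{ay - xb}.
\]
So the whole lemma rests on bounding the numerator $\abs{ay - xb}$ from above by $b(y+b)$.

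The key step is the bound on $\abs{ay - xb}$. Since $a, b, x, y$ are all nonnegative, both $ay$ and $xb$ are nonnegative, and therefore $\abs{ay - xb} \leq \max(ay, xb)$. Now I would use the two ordering hypotheses separately: from $a \leq b$ we get $ay \leq by$, and from $x \leq y$ we get $xb \leq yb$; hence $\max(ay, xb) \leq by$. Combining, $\abs{ay - xb} \leq by \leq by + b^2 = b(y + b)$, which is exactly the reduced inequality, completing the proof.

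I do not expect any real obstacle here; the only thing to be careful about is the case analysis hidden inside the absolute value. The clean way to avoid splitting into the cases $ay \geq xb$ and $ay < xb$ is precisely the observation that $\abs{ay - xb} \leq \max(ay, xb)$, after which the two hypotheses $a \leq b$ and $x \leq y$ each handle one of the two terms in the maximum. The other minor point to state explicitly is the positivity of the denominator $y(y+b)$ used when clearing it, together with the remark on the degenerate case $y = 0$.
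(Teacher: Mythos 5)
Your proof is correct, and it takes a genuinely different route from the paper's. The paper argues by contradiction: assuming $b < \beta y$, it bounds the two signed differences separately, showing $\frac{x+a}{y+b} - \frac{x}{y} \le \frac{\beta}{1+\beta}\pr{1 - \frac{x}{y}} < \beta$ (via $a \le b$ and the monotonicity of $t \mapsto \frac{x+t}{y+t}$ when $x \le y$) and $\frac{x}{y} - \frac{x+a}{y+b} \le \frac{\beta}{1+\beta}\frac{x}{y} < \beta$, which together contradict $\beta = \abs{\frac{x+a}{y+b} - \frac{x}{y}}$. You instead compute the difference exactly over a common denominator, $\beta = \abs{ay - xb}/\pr{y(y+b)}$, reduce the claim to $\abs{ay - xb} \le b(y+b)$, and settle it with $\abs{ay - xb} \le \max(ay, xb) \le by$, spending the hypotheses $a \le b$ and $x \le y$ on one term of the maximum each. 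Your version is direct rather than by contradiction, avoids the sign case analysis cleanly via the $\max$ bound, and in fact yields the marginally sharper conclusion $\beta y \le \frac{by}{y+b} \le b$; you also handle the degenerate case $y = 0$ explicitly, which the paper's proof silently assumes away when it divides by $y$ (harmlessly, since $b < \beta y$ forces $y > 0$ there). The paper's phrasing has the mild virtue of exhibiting how far the ratio $\frac{x}{y}$ can drift under the perturbation, which mirrors how the lemma is used in Proposition~\ref{prop:ginitime}, but as a proof of the stated lemma yours is shorter and more elementary.
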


\begin{proof}
Assume $b < \beta y$. Then 
\[
\begin{split}
	\frac{x + a}{y + b} - \frac{x}{y} &
	\leq \frac{x + \beta y}{y + \beta y}  - \frac{x}{y} 
	 = \frac{\beta}{1 + \beta}\pr{1 - \frac{x}{y}}  < \beta
\end{split}
\]
and
\[
	\frac{x}{y} - \frac{x + a}{y + b} \leq \frac{x}{y} - \frac{x}{y + b} < \frac{x}{y} - \frac{x}{y + \beta y}
	= \frac{\beta}{1 + \beta}\frac{x}{y}  < \beta,
\]
contradicting the definition of $\beta$.
\qed
\end{proof}

\begin{proof}[of Proposition~\ref{prop:ginitime}]
To prove the approximation guarantee,
let $j$ be the optimal split variable and let $T_i$ be the tree, where $j$ is stored.
Let $j'$ be the feature in $T_i$ with the smallest key, then Proposition~\ref{prop:decompgini}
implies that $\gini{D \mid j', \mu_i} \leq \gini{D \mid j, \mu_i}$. Proposition~\ref{prop:ginibound}
implies that $\gini{D \mid j'} \leq (1 + \alpha) \gini{D \mid j}$.
Finally, the algorithm returns $j^*$  with
	$\gini{D \mid j^*} \leq \gini{D \mid j'} 
	 \leq (1 + \alpha) \gini{D \mid j}$
proving the approximation guarantee.

Next let us prove the computational complexity.
The first for-loop requires $\bigO{\sum_{x \in D, j} x_j \log d}$ total time to process $n$ data points.

To analyze the second for-loop,
select and fix a feature $j$. Let $k$ be the number of times the feature $j$ changes its bin due to the second for-loop. 
Let $u_\ell$ be the value of $n_{j0}$ after $j$ changes its bin for the $\ell$th time during the second for-loop.

Fix $\ell$ and let $i$ be the interval from which $j$ is moved when $j$ is moved for the $\ell$th time.
Let $\rho$ be the value of $c_{j01}/n_{j0}$ when $j$ is moved from $B_i$ and
let $\rho'$ be the earlier value of $c_{j01}/n_{j0}$ when $j$ is moved to $B_i$.

Now the definitions of $\ind{\cdot}$ and $s_i$ and $t_i$ imply
\[
	\abs{\rho' - \rho} \geq  \min(\rho' - s_i, t_i - \rho') \geq \gamma, \ \text{where}\  \gamma = \frac{\alpha}{4(2 + \alpha)}\ .
\]
Lemma~\ref{lem:viol} guarantees that $u_{\ell} \geq (1 + \gamma) u_{\ell - 1}$.
Applied iteratively, we have $n \geq u_k \geq (1 + \gamma)^k$, and consequently $k \in \bigO{\log_{1 + \gamma} n}$.
Since there are $\bigO{\alpha^{-1}}$ intervals, the second for-loop requires $\bigO{n/\alpha + d \log_{1 + \gamma}n \log d}$ time
to process $n$ data points.

Thus,
$\bigO{\sum_{x \in D, j} x_j \log d + n/\alpha + d \log_{1 + \gamma}n \log d}$
time is needed to process $n$ data points, or $\bigO{\alpha^{-1} + m\log d}$ amortized time per data point.
\qed
\end{proof}

\section{Experimental evaluation}\label{sec:exp}
\newlength{\subfigh}
\newlength{\subfigw}

In this section we present our experimental evaluation.

\textbf{Setup}: We implemented the algorithm with C++,\!\footnote{The code is available at \url{https://version.helsinki.fi/DACS}.}
and conducted the experiments using a 2.3GHz Intel Core i5 processor and 16GB RAM.
Unless specified, we set $\alpha = 0.1$.
We used \algbasesparse and \algbasegini as baselines, algorithms
that select the optimal feature in $\bigO{d}$ time by testing every seen feature.

\textbf{Datasets}:
In order to test the algorithms we generated a family of datasets.
A dataset in 
the family, named $\dtsparse(n, d_1, d_2, q)$, consisted of $n$ data points and $d_1 + d_2$ features.
Before generating the data,
we sample $\theta_j$ from $[0, 1]$ uniformly for each $j = 1, \ldots d_1$.
To generate a data point we first sample
a label, say $y_i$, from a Bernoulli distribution $\bern{1/2}$.
The $j$th feature, where $j = 1, \ldots, d_1$ was set to $y_i$, and flipped with a probability $\theta_j$.
The remaining features are generated independently from a Bernoulli distribution $\bern{q}$.
Unless specified, we set $n = 10\,000$, $d_1 = 10$, $d_2 = 10\,000$ and $q = 10 / 10\,000$.

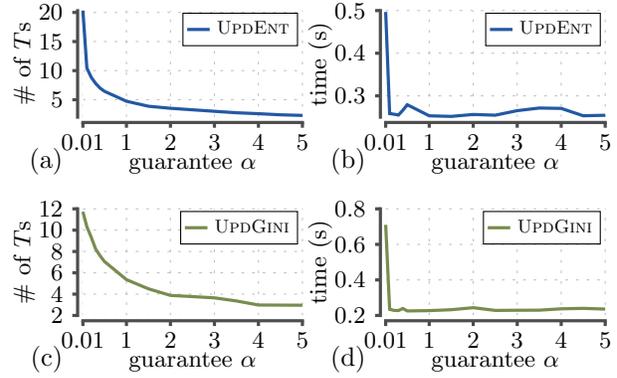
\begin{figure}[t!]
\begin{subcaptiongroup}
\phantomcaption\label{fig:sparsecntvsalpha}
\phantomcaption\label{fig:sparsetimevsalpha}
\phantomcaption\label{fig:ginicntvsalpha}
\phantomcaption\label{fig:ginitimevsalpha}
\setlength{\subfigh}{3cm}
\setlength{\subfigw}{4.5cm}
\setlength{\tabcolsep}{0pt}
\begin{tabular}{ll}
\begin{tikzpicture}
\begin{axis}[xlabel={guarantee $\alpha$}, ylabel= {\# of $T$s},
    width = \subfigw,
    height = \subfigh,
    ymin = 2,
    scaled y ticks = false,
    cycle list name=yaf,
    yticklabel style={/pgf/number format/fixed},
    xticklabel style={/pgf/number format/fixed},
	xtick = {0.01, 1, 2, 3, 4, 5},
    no markers,
	legend entries = {\algupdsparse}
]
\addplot table [x=alpha, y=bcnt] {sparse_approx.txt};
\pgfplotsextra{\yafdrawaxis{0.01}{5}{2}{20.3}}
\end{axis}
\node[anchor=north east] at (-4pt, -8pt) {(a)};
\end{tikzpicture} &
\begin{tikzpicture}
\begin{axis}[xlabel={guarantee $\alpha$}, ylabel= {time (s)},
    width = \subfigw,
    height = \subfigh,
    ymin = 0.25,
    ymax = 0.50,
	xmin = 0.01,
    scaled y ticks = false,
    cycle list name=yaf,
    yticklabel style={/pgf/number format/fixed},
    xticklabel style={/pgf/number format/fixed},
	xtick = {0.01, 1, 2, 3, 4, 5},
    no markers,
	legend entries = {\algupdsparse}
]
\addplot table [x=alpha, y=time1] {sparse_approx.txt};
\pgfplotsextra{\yafdrawaxis{0.01}{5}{0.25}{0.5}}
\end{axis}
\node[anchor=north east] at (-4pt, -8pt) {(b)};
\end{tikzpicture} \\
\begin{tikzpicture}
\begin{axis}[xlabel={guarantee $\alpha$}, ylabel= {\# of $T$s},
    width = \subfigw,
    height = \subfigh,
    ymin = 2,
    ymax = 12,
    scaled y ticks = false,
    cycle list name=yaf,
    yticklabel style={/pgf/number format/fixed},
    xticklabel style={/pgf/number format/fixed},
	xtick = {0.01, 1, 2, 3, 4, 5},
    no markers,
	legend entries = {\algupdgini}
]
\addplot[yafcolor3] table [x=alpha, y=bcnt] {gini_approx.txt};
\pgfplotsextra{\yafdrawaxis{0.01}{5}{2}{12}}
\end{axis}
\node[anchor=north east] at (-4pt, -8pt) {(c)};
\end{tikzpicture} &
\begin{tikzpicture}
\begin{axis}[xlabel={guarantee $\alpha$}, ylabel= {time (s)},
    width = \subfigw,
    height = \subfigh,
    ymin = 0.2,
    ymax = 0.8,
	xmin = 0.01,
    scaled y ticks = false,
    cycle list name=yaf,
    yticklabel style={/pgf/number format/fixed},
    xticklabel style={/pgf/number format/fixed},
	xtick = {0.01, 1, 2, 3, 4, 5},
    no markers,
	legend entries = {\algupdgini}
]
\addplot[yafcolor3] table [x=alpha, y=time1] {gini_approx.txt};
\pgfplotsextra{\yafdrawaxis{0.01}{5}{0.2}{0.8}}
\end{axis}
\node[anchor=north east] at (-4pt, -8pt) {(d)};
\end{tikzpicture}
\end{tabular}

\end{subcaptiongroup}

\caption{Statistics as a function of $\alpha$ for \algupdsparse and \algupdgini:
Figures~\ref{fig:sparsecntvsalpha},\ref{fig:ginicntvsalpha} show the number of non-empty trees,
Figures~\ref{fig:sparsetimevsalpha},\ref{fig:ginitimevsalpha} show the running time.
Figures show averages of 100 repeats.
}
\label{fig:sparsevsalpha}
\end{figure}

\textbf{Results:} 
Let us first consider the role of the approximation guarantee $\alpha$.
Figures~\ref{fig:sparsecntvsalpha},\ref{fig:ginicntvsalpha} show the number of non-empty trees after processing the data
as a function of $\alpha$. The number of trees declines and the decline slows down as $\alpha$
increases. This is in line with Propositions~\ref{prop:sparsebound}~and~\ref{prop:ginitime} stating that the number of trees is in $\bigO{\alpha^{-1} \log \log n}$ and $\bigO{\alpha^{-1}}$ 
for \algupdsparse and \algupdgini, respectively. The number of non-empty trees is less for \algupdgini than for \algupdsparse. 

The total running time of processing each point, shown in Figure~\ref{fig:sparsetimevsalpha},\ref{fig:ginitimevsalpha}, stays relatively constant for $\alpha > 0.1$ but increases as we decrease $\alpha < 0.1$.
For comparison, \algbasesparse required $4.52$s, that is, 17 times slower than $0.26$s needed by \algupdsparse with $\alpha = 0.1$.
Similarly, \algbasegini required $1.69$s whereas \algupdgini required $0.23$s when $\alpha = 0.1$.

We also compared 
the score of the approximate split and the score of the optimal split after the last data point has been processed.
For the tested $\alpha$s the approximate split always returned the optimal split for both objectives.
The approximate split yielded suboptimal results when we set $\alpha$ to an unreasonably high value such as $\alpha = 40$.

\begin{figure}[t!]
\begin{subcaptiongroup}
\phantomcaption\label{fig:sparsetimevscnt}
\phantomcaption\label{fig:sparsecntvscnt}
\phantomcaption\label{fig:ginitimevscnt}
\setlength{\subfigh}{3cm}
\setlength{\subfigw}{3.4cm}
\begin{tikzpicture}
\begin{axis}[xlabel={$n / 1000$}, ylabel= {time (s)},
    width = \subfigw,
    height = \subfigh,
    ymin = 0,
    ymax = 5,
    scaled x ticks = false,
    scaled y ticks = false,
    cycle list name=yaf,
    yticklabel style={/pgf/number format/fixed},
	ytick = {0, 1, 2, 3, 4, 5},
    no markers,
]
\addplot table [x expr={\thisrow{cnt}/1000}, y=time1] {sparse_cnt.txt} node[black, pos=0.6, sloped, above, font=\scriptsize, inner sep = 1pt] {\algupdsparse};
\addplot table [x expr={\thisrow{cnt}/1000}, y=time2] {sparse_cnt.txt} node[black, pos=0.5, sloped, above, font=\scriptsize, inner sep = 1pt] {\algbasesparse};
\pgfplotsextra{\yafdrawaxis{0.5}{10}{0}{5}}
\end{axis}
\node[anchor=north east] at (0, -8pt) {(a)};
\end{tikzpicture}%
\begin{tikzpicture}
\begin{axis}[xlabel={$n / 1000$}, ylabel= {\# of $T$s},
    width = \subfigw,
    height = \subfigh,
    ymin = 10,
    ymax = 20,
    scaled x ticks = false,
    scaled y ticks = false,
    cycle list name=yaf,
    yticklabel style={/pgf/number format/fixed},
    no markers,
]
\addplot table [x expr={\thisrow{cnt}/1000}, y=bcnt] {sparse_cnt.txt} node[black, pos=0.4, sloped, above, font=\scriptsize] {\algupdsparse};
\pgfplotsextra{\yafdrawaxis{0.5}{10}{10}{20}}
\end{axis}
\node[anchor=north east] at (0, -8pt) {(b)};
\end{tikzpicture}%
\begin{tikzpicture}
\begin{axis}[xlabel={$n / 1000$}, ylabel= {time (s)},
    width = \subfigw,
    height = \subfigh,
    ymin = 0,
    ymax = 2,
    scaled x ticks = false,
    scaled y ticks = false,
    cycle list name=yaf,
    yticklabel style={/pgf/number format/fixed},
	ytick = {0, 1, 2, 3, 4, 5},
    no markers,
]
\addplot[yafcolor3] table [x expr={\thisrow{cnt}/1000}, y=time1] {gini_cnt.txt} node[black, pos=0.6, sloped, above, font=\scriptsize, inner sep = 1pt] {\algupdgini};
\addplot[yafcolor4] table [x expr={\thisrow{cnt}/1000}, y=time2] {gini_cnt.txt} node[black, pos=0.5, sloped, above, font=\scriptsize, inner sep = 1pt] {\algbasegini};
\pgfplotsextra{\yafdrawaxis{0.5}{10}{0}{2}}
\end{axis}
\node[anchor=north east] at (0, -8pt) {(c)};
\end{tikzpicture}
\end{subcaptiongroup}

\caption{Statistics as a function of the number of data points in synthetic data:
Figures~\ref{fig:sparsetimevscnt},\ref{fig:ginitimevscnt} show the running times and
Figure~\ref{fig:sparsecntvscnt} shows the number of non-empty trees. 
Figures show averages of 100 repeats.}

\end{figure}
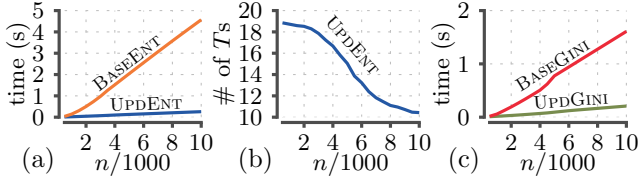

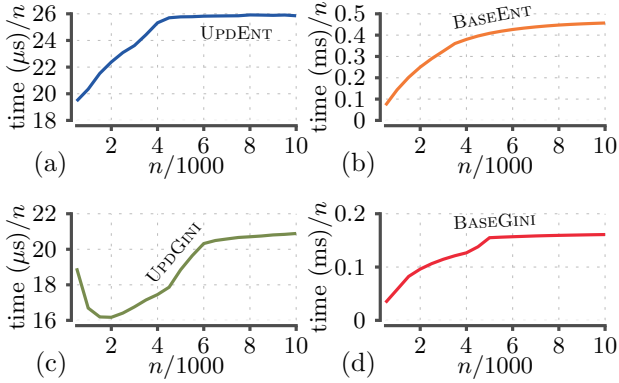
\begin{figure}[t!]
\begin{subcaptiongroup}
\phantomcaption\label{fig:sparsetimeratioapprox}
\phantomcaption\label{fig:sparsetimeratioexact}
\phantomcaption\label{fig:ginitimeratioapprox}
\phantomcaption\label{fig:ginitimeratioexact}
\setlength{\subfigh}{3cm}
\setlength{\subfigw}{4.5cm}
\setlength{\tabcolsep}{0pt}
\begin{tabular}{ll}
\begin{tikzpicture}
\begin{axis}[xlabel={$n / 1000$}, ylabel= {$\text{time ($\mu$s)} / n$},
    width = \subfigw,
    height = \subfigh,
    ymin = 18,
    ymax = 26,
    scaled x ticks = false,
    scaled y ticks = false,
    cycle list name=yaf,
    yticklabel style={/pgf/number format/fixed, /pgf/number format/precision=3},
    no markers,
]
\addplot table [x expr={\thisrow{cnt}/1000}, y expr={\thisrow{tr1}*1000}] {sparse_cnt.txt} node[black, pos=0.8, sloped, below, font=\scriptsize] {\algupdsparse};
\pgfplotsextra{\yafdrawaxis{0.5}{10}{18}{26}}
\end{axis}
\node[anchor=north east] at (0, -8pt) {(a)};
\end{tikzpicture} &
\begin{tikzpicture}
\begin{axis}[xlabel={$n / 1000$}, ylabel= {$\text{time (ms)} / n$},
    width = \subfigw,
    height = \subfigh,
    ymin = 0,
    ymax = 0.5,
    scaled x ticks = false,
    scaled y ticks = false,
    cycle list name=yaf,
    yticklabel style={/pgf/number format/fixed},
	ytick = {0, 0.1, 0.2, 0.3, 0.4, 0.5},
    no markers,
]
\addplot[yafcolor2] table [x expr={\thisrow{cnt}/1000}, y=tr2] {sparse_cnt.txt} node[black, pos=0.5, sloped, above, font=\scriptsize] {\algbasesparse};
\pgfplotsextra{\yafdrawaxis{0.5}{10}{0}{0.5}}
\end{axis}
\node[anchor=north east] at (0, -8pt) {(b)};
\end{tikzpicture} \\
\begin{tikzpicture}
\begin{axis}[xlabel={$n / 1000$}, ylabel= {$\text{time ($\mu$s)} / n$},
    width = \subfigw,
    height = \subfigh,
    ymin = 16,
    ymax = 22,
    scaled x ticks = false,
    scaled y ticks = false,
    cycle list name=yaf,
    yticklabel style={/pgf/number format/fixed, /pgf/number format/precision=3},
    no markers,
]
\addplot[yafcolor3] table [x expr={\thisrow{cnt}/1000}, y expr={\thisrow{tr1}*1000}] {gini_cnt.txt} node[black, pos=0.6, sloped, above, font=\scriptsize] {\algupdgini};
\pgfplotsextra{\yafdrawaxis{0.5}{10}{16}{22}}
\end{axis}
\node[anchor=north east] at (0, -8pt) {(c)};
\end{tikzpicture} &
\begin{tikzpicture}
\begin{axis}[xlabel={$n / 1000$}, ylabel= {$\text{time (ms)} / n$},
    width = \subfigw,
    height = \subfigh,
    ymin = 0,
    ymax = 0.2,
    scaled x ticks = false,
    scaled y ticks = false,
    cycle list name=yaf,
    yticklabel style={/pgf/number format/fixed},
	ytick = {0, 0.1, 0.2, 0.3, 0.4, 0.5},
    no markers,
]
\addplot[yafcolor4] table [x expr={\thisrow{cnt}/1000}, y=tr2] {gini_cnt.txt} node[black, pos=0.5, sloped, above, font=\scriptsize] {\algbasegini};
\pgfplotsextra{\yafdrawaxis{0.5}{10}{0}{0.2}}
\end{axis}
\node[anchor=north east] at (0, -8pt) {(d)};
\end{tikzpicture}
\end{tabular}
\end{subcaptiongroup}

\caption{Running time for processing a single data point as a function of the number of the data points in synthetic data:
Note that $y$-axes have different scales and time units.
Figures show averages of 100 repeats.}
\label{fig:sparsetimeratio}

\end{figure}

Next let us consider the running time as a function of the number of data points.
Figure~\ref{fig:sparsetimevscnt} shows the running time as a function of the processed data points.
We see that both the baseline and the approximation algorithm grow somewhat linearly.
A more detailed picture is given in Figure~\ref{fig:sparsetimeratio} where we see
show the running time for processing an individual data point as a function of the already processed data points.
Generally, all algorithms see increase in processing time. However, this is due to the fact that we ignore
the features that we have not seen, smaller datasets will have more features that have not been active
in a single transaction. Interestingly enough, \algupdgini is slower for early data points and speedup until 2000 points have been seen.
We conjecture that this is due to the increased need to update the search tree for features (Line~\ref{alg:giniupdfeature} in \algupdgini) as
at the beginning the probabilities $c_{j01}/n_{j0}$ are not yet accurate and may change frequently.

In Figure~\ref{fig:sparsecntvscnt} we see that the number of non-empty
trees \emph{decreases} as the number of points increases. This is due to the following reason: in the beginning
the frequencies $c_{i01} / n_{i0}$ fluctuate, leading to many variables being included in many trees.
As the number of data points increases, the frequencies become more stable, leading to fewer calls on Lines~\ref{alg:extend1}~and~\ref{alg:extend2}
in \algupdsparse. On the other hand, the number of trees remains constant, approximately 10, for \algupdgini as the number of points increases.

\begin{figure}[t!]
\begin{subcaptiongroup}
\phantomcaption\label{fig:sparsetimevsdim}
\phantomcaption\label{fig:sparsetimevsdens}
\phantomcaption\label{fig:ginitimevsdim}
\phantomcaption\label{fig:ginitimevsdens}
\setlength{\subfigh}{3cm}
\setlength{\subfigw}{4.8cm}
\setlength{\tabcolsep}{0pt}
\begin{tabular}{ll}
\begin{tikzpicture}
\begin{axis}[xlabel={dim.$/1000$}, ylabel= {time (s)},
    width = \subfigw,
    height = \subfigh,
    ymin = 0,
    ymax = 5,
    scaled x ticks = false,
    scaled y ticks = false,
    cycle list name=yaf,
    yticklabel style={/pgf/number format/fixed},
    xticklabel style={/pgf/number format/fixed},
	ytick = {0, 1, 2, 3, 4, 5},
    no markers,
]
\addplot table [x expr={\thisrow{dim}/1000}, y=time1] {sparse_noise.txt} node[black, pos=0.585, above, font=\scriptsize] {\algupdsparse};
\addplot table [x expr={\thisrow{dim}/1000}, y=time2] {sparse_noise.txt} node[black, pos=0.8, sloped, above, font=\scriptsize] {\algbasesparse};
\pgfplotsextra{\yafdrawaxis{0.5}{10}{0}{5}}
\end{axis}
\node[anchor=north east] at (0, -8pt) {(a)};
\end{tikzpicture} &
\begin{tikzpicture}
\begin{axis}[xlabel={density}, ylabel= {time (s)},
    width = \subfigw,
    height = \subfigh,
    ymin = 0,
    ymax = 5,
	xmin = 0,
	xmax = 0.03,
    scaled x ticks = false,
    scaled y ticks = false,
    cycle list name=yaf,
    yticklabel style={/pgf/number format/fixed},
    xticklabel style={/pgf/number format/fixed},
	ytick = {0, 1, 2, 3, 4, 5},
    no markers,
	legend entries = {\algbasesparse, \algupdsparse}
]
\addplot+[yafcolor2] table [x=dens, y=time2] {sparse_noise.txt}; 
\addplot+[yafcolor5] table [x=dens, y=time1] {sparse_noise.txt};
\pgfplotsextra{\yafdrawaxis{0}{0.03}{0}{5}}
\end{axis}
\node[anchor=north east] at (0, -8pt) {(b)};
\end{tikzpicture} \\
\begin{tikzpicture}
\begin{axis}[xlabel={dim.$/1000$}, ylabel= {time (s)},
    width = \subfigw,
    height = \subfigh,
    ymin = 0,
    ymax = 2,
    scaled x ticks = false,
    scaled y ticks = false,
    cycle list name=yaf,
    yticklabel style={/pgf/number format/fixed},
    xticklabel style={/pgf/number format/fixed},
	ytick = {0, 1, 2, 3, 4, 5},
    no markers,
]
\addplot[yafcolor3] table [x expr={\thisrow{dim}/1000}, y=time1] {gini_noise.txt} node[black, pos=0.6, above, font=\scriptsize] {\algupdgini};
\addplot[yafcolor4] table [x expr={\thisrow{dim}/1000}, y=time2] {gini_noise.txt} node[black, pos=0.7, sloped, above, font=\scriptsize] {\algbasegini};
\pgfplotsextra{\yafdrawaxis{0.5}{10}{0}{2}}
\end{axis}
\node[anchor=north east] at (0, -8pt) {(c)};
\end{tikzpicture} &
\begin{tikzpicture}
\begin{axis}[xlabel={density}, ylabel= {time (s)},
    width = \subfigw,
    height = \subfigh,
    ymin = 0,
    ymax = 2,
	xmin = 0,
	xmax = 0.03,
    scaled x ticks = false,
    scaled y ticks = false,
    cycle list name=yaf,
    yticklabel style={/pgf/number format/fixed},
    xticklabel style={/pgf/number format/fixed},
	ytick = {0, 1, 2, 3, 4, 5},
    no markers,
	legend entries = {\algbasegini, \algupdgini}
]
\addplot+[yafcolor4] table [x=dens, y=time2] {gini_noise.txt}; 
\addplot+[yafcolor3] table [x=dens, y=time1] {gini_noise.txt};
\pgfplotsextra{\yafdrawaxis{0}{0.03}{0}{2}}
\end{axis}
\node[anchor=north east] at (0, -8pt) {(d)};
\end{tikzpicture}
\end{tabular}
\end{subcaptiongroup}

\caption{
Figures~\ref{fig:sparsetimevsdim},\ref{fig:ginitimevsdim} show
the
running time as a function of 
the number of features for data sets $\dtsparse(10000, 10, d, 10 / d)$.
As $d$ increases the number of features increases but the average number of 1s
stays the same. Figures~\ref{fig:sparsetimevsdens},\ref{fig:ginitimevsdens}
show the running as a function of density (=proportion of 1s) for the same data sets.
}

\label{fig:sparsevsdens}

\end{figure}
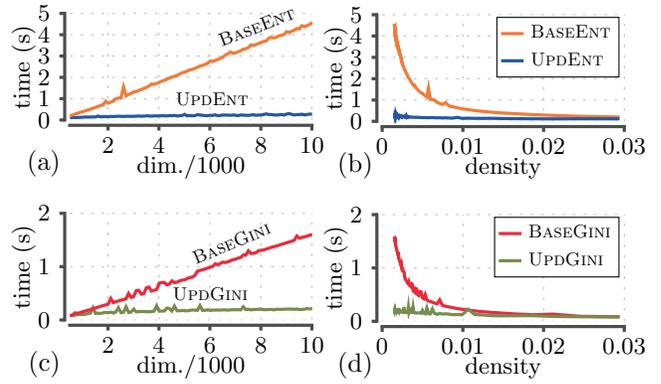

Finally, let us consider the effect on the running time as a function of sparsity.
In order to test this we generated 96 datasets $\dtsparse(10000, 10, d, 10 / d)$ by varying $d = 500, 600, \ldots, 10000$.
Each dataset has the same expected number of 1s but increasingly large number of features. 
We see in Figure~\ref{fig:sparsevsdens} that \algupdsparse and \algupdgini are especially beneficial over the baselines
when dealing with sparse data.

\textbf{Results with benchmark datasets}:
In addition to the synthetic datasets we used benchmark datasets.
\dtname{Author}\footnote{\label{footnote:uci}\url{https://archive.ics.uci.edu/ml/index.php}} is a subset of a dataset depicting authors from Victorian era~\citep{gungor2018benchmarking}, consisting of 100 word text snippets and
the label being the gender of the author.
\dtname{Farm}\footref{footnote:uci} consists of online ads from farm related websites, the label being whether the ad has been approved by the content owner.
\dtname{SRAA}\footnote{\url{https://people.cs.umass.edu/~mccallum/data.html}, \url{https://www.csie.ntu.edu.tw/~cjlin/libsvmtools/datasets/binary.html}\label{footnote:svm}} (also named \dtname{real-sim}) consists of Usenet posts discussing real vs. simulated racing and aviation.
\dtname{20News}\footref{footnote:svm} consists of Usenet posts from 20 groups, where \texttt{sci.*}, \texttt{sci.*}, and \texttt{misc.forsale} are labelled as 1
and the remaining groups are labelled as 0.
The characteristics of the datasets are given in Table~\ref{tab:dt}.

\begin{table}[t!]

\caption{Characteristics of the datasets.
Density reports the proportion of 1s.
}
\label{tab:dt}

\begin{tabular*}{\columnwidth}{l@{\extracolsep{\fill}}rrr}
\toprule
Name & $n$ & $d$ & density \\
\midrule
\dtname{Author} & 53678 & 10000 & 0.007 \\
\dtname{Farm} & 4143 & 54877 & 0.0036 \\
\dtname{SRAA} & 72309 & 20958 & 0.0024 \\
\dtname{20News} & 19996 & 1355191 & 0.00033 \\
\bottomrule
\end{tabular*}
\end{table}

\begin{table}[t!]

\caption{
Running times ($\alpha = 0.1$).
}
\label{tab:results}

\setlength{\tabcolsep}{0pt}
\begin{tabular*}{\columnwidth}{l@{\extracolsep{\fill}}rrrr}
\toprule
Name & \algupdsparse & \algbasesparse & \algupdgini & \algbasegini  \\
\midrule
\dtname{Author} & 
6.71s &
47.57s &
6.41s &
11.49s \\

\dtname{Farm} & 
2.75s &
5.52s &
2.44s &
3.78s \\

\dtname{SRAA} & 
11.52s &
1m18s &
7.86s &
25.37s \\

\dtname{20News} & 
39.35s &
22m7s &
31.51s &
11m46s\\

\bottomrule
\end{tabular*}

\end{table}

\begin{figure}[t!]
\begin{subcaptiongroup}
\phantomcaption\label{fig:timevscntnews20}
\phantomcaption\label{fig:timevscntgininews20}

\setlength{\tabcolsep}{0pt}
\setlength{\subfigh}{3cm}
\setlength{\subfigw}{4.3cm}
\begin{tabular}{rr}
\begin{tikzpicture}
\begin{axis}[xlabel={$n/ 1000$}, ylabel= {time (s)},
    width = \subfigw,
    height = \subfigh,
    ymin = 0,
    scaled x ticks = false,
    scaled y ticks = false,
    cycle list name=yaf,
    yticklabel style={/pgf/number format/fixed},
    no markers,
]
\addplot table [x expr = {\thisrow{cnt} / 1000}, y=ea] {news20_cnt.txt} node[black, pos=0.7, sloped, above, font=\scriptsize] {\algupdsparse};
\addplot table [x expr = {\thisrow{cnt} / 1000}, y=eb] {news20_cnt.txt} node[black, pos=0.5, sloped, above, font=\scriptsize] {\algbasesparse};
\pgfplotsextra{\yafdrawaxis{0}{20}{0}{1320}}
\end{axis}
\node[anchor=north east] at (0, -8pt) {(a)};
\end{tikzpicture} &
\begin{tikzpicture}
\begin{axis}[xlabel={$n/ 1000$}, ylabel= {time (s)},
    width = \subfigw,
    height = \subfigh,
    ymin = 0,
    ymax = 700,
    scaled x ticks = false,
    scaled y ticks = false,
    cycle list name=yaf,
    yticklabel style={/pgf/number format/fixed},
    no markers,
]
\addplot[yafcolor3] table [x expr = {\thisrow{cnt} / 1000}, y=ga] {news20_cnt.txt} node[black, pos=0.7, sloped, above, font=\scriptsize] {\algupdgini};
\addplot[yafcolor4] table [x expr = {\thisrow{cnt} / 1000}, y=gb] {news20_cnt.txt} node[black, pos=0.5, sloped, above, font=\scriptsize] {\algbasegini};
\pgfplotsextra{\yafdrawaxis{0}{20}{0}{700}}
\end{axis}
\node[anchor=north east] at (0, -8pt) {(b)};
\end{tikzpicture}
\end{tabular}
\end{subcaptiongroup}

\caption{Running time as a function of $n$ data points as a function of $n$ for \dtname{20News}.
Here, we set $\alpha = 0.1$.}
\label{fig:timevscntbench}
\end{figure}
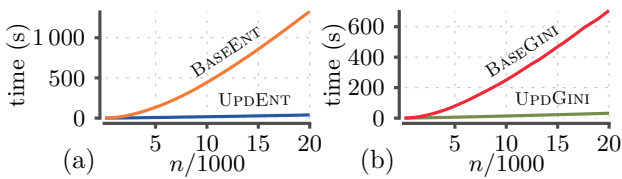

We applied the algorithms to the benchmark datasets by adding each point sequentially,
and querying the split after each addition.

The approximation algorithms returned the optimal features with one exception: for \dtname{SRAA} \algupdgini
returned suboptimal feature for 21 updates (out of 72309). Within these transactions
the largest error occurred when \algupdgini selected a feature with a score of 174.058 whereas the optimal feature
had a score of 173.520, resulting in 1.003 approximation, a significantly lower number than $1 + \alpha = 1.1$ theoretical guarantee.

Let us compare the running times required to process the complete dataset,
The running times given in
Table~\ref{tab:results} show that the approximation algorithms can provide
significant gain.  For example, \algupdsparse required 40 seconds to process
\dtname{20News} while \algbasesparse needed over 22 minutes.  We 
highlight the (cumulative) processing time as a function of added points for
\dtname{20News} in Figure~\ref{fig:timevscntbench}. Similar plots for the remaining benchmark datasets
are given in Appendix (see archived version).
As expected the approximation algorithms are
generally signicantly faster than the baselines. Note that in 
\dtname{20News} (and in \dtname{Farm}) the processing time needed by the baseline algorithm is convex. This is due to the fact that
the number of features in both datasets is large, and the baseline algorithms do
not consider features that have not been seen, resulting in a faster search
when fewer points have been processed. In such cases, the baseline approach may be faster
due to the constant costs of maintaining search trees in the approximation algorithms.
However, as the number of seen features increases over time, the benefit of the approximation algorithms becomes obvious.

\section{Concluding remarks}\label{sec:conclusions}

In this paper we considered estimating splits when constructing decision trees.
We focused on sparse data with binary labels, and
showed that we can estimate the optimal split for sparse data when information gain is used in $\bigO{\alpha^{-1} (1 + m \log d) \log \log n}$ time,
where $n$ is the number of data points, $d$ is the number of features and $m$ is the number of 1s in a data point.
When Gini index is used, we can estimate the split in $\bigO{\alpha^{-1} + m \log d}$ time.
If $m \ll d$, then
this is significantly faster than the $\bigO{d}$ baseline approach.

Speeding up incremental construction of decision trees by approximating splits
yields several potential future lines of work. One of the key assumptions is that the
labels are binary. If this assumption does not hold, then a different appproach is required as adopting the described approach is most likely not feasible.
Moreover, our main focus was time complexity and our algorithms have the same space complexity  as the baseline approaches. Reducing the space
complexity is a potential future line of work.

\bibliographystyle{splncsnat}
\bibliography{bibliography}

\newpage
\appendix
\section{Additional figures}\label{sec:appexp}

We show the
cumulative processing time as a function of added points in
Figure~\ref{fig:timevscntbench2}.

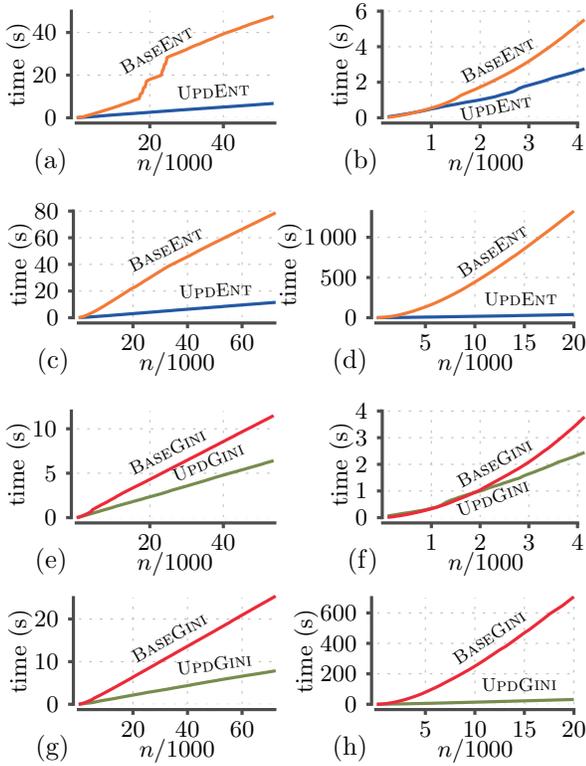
\begin{figure}[h!]
\begin{subcaptiongroup}
\phantomcaption\label{fig:timevscntauthor2}
\phantomcaption\label{fig:timevscntfarm2}
\phantomcaption\label{fig:timevscntrs2}
\phantomcaption\label{fig:timevscntnews202}
\phantomcaption\label{fig:timevscntginiauthor2}
\phantomcaption\label{fig:timevscntginifarm2}
\phantomcaption\label{fig:timevscntginirs2}
\phantomcaption\label{fig:timevscntgininews202}

\setlength{\tabcolsep}{0pt}
\setlength{\subfigh}{3cm}
\setlength{\subfigw}{4.2cm}
\begin{tabular}{rrrr}
\begin{tikzpicture}
\begin{axis}[xlabel={$n/ 1000$}, ylabel= {time (s)},
    width = \subfigw,
    height = \subfigh,
    ymin = 0,
    ymax = 50,
    scaled x ticks = false,
    scaled y ticks = false,
    cycle list name=yaf,
    yticklabel style={/pgf/number format/fixed},
    no markers,
]
\addplot table [x expr = {\thisrow{cnt} / 1000}, y=ea] {author_cnt.txt} node[black, pos=0.7, sloped, above, font=\scriptsize] {\algupdsparse};
\addplot table [x expr = {\thisrow{cnt} / 1000}, y=eb] {author_cnt.txt} node[black, pos=0.5, sloped, above, font=\scriptsize] {\algbasesparse};
\pgfplotsextra{\yafdrawaxis{0}{54}{0}{50}}
\end{axis}
\node[anchor=north east] at (0, -8pt) {(a)};
\end{tikzpicture}&
\begin{tikzpicture}
\begin{axis}[xlabel={$n/ 1000$}, ylabel= {time (s)},
    width = \subfigw,
    height = \subfigh,
    ymin = 0,
    ymax = 6,
    scaled x ticks = false,
    scaled y ticks = false,
    cycle list name=yaf,
    yticklabel style={/pgf/number format/fixed},
    no markers,
]
\addplot table [x expr = {\thisrow{cnt} / 1000}, y=ea] {farm_cnt.txt} node[black, pos=0.5, sloped, below, inner sep=1pt, font=\scriptsize] {\algupdsparse};
\addplot table [x expr = {\thisrow{cnt} / 1000}, y=eb] {farm_cnt.txt} node[black, pos=0.5, sloped, above, font=\scriptsize] {\algbasesparse};
\pgfplotsextra{\yafdrawaxis{0}{4}{0}{6}}
\end{axis}
\node[anchor=north east] at (0, -8pt) {(b)};
\end{tikzpicture}\\
\begin{tikzpicture}
\begin{axis}[xlabel={$n/ 1000$}, ylabel= {time (s)},
    width = \subfigw,
    height = \subfigh,
    ymin = 0,
    ymax = 80,
    scaled x ticks = false,
    scaled y ticks = false,
    cycle list name=yaf,
    yticklabel style={/pgf/number format/fixed},
    no markers,
]
\addplot table [x expr = {\thisrow{cnt} / 1000}, y=ea] {rs_cnt.txt} node[black, pos=0.7, sloped, above, font=\scriptsize] {\algupdsparse};
\addplot table [x expr = {\thisrow{cnt} / 1000}, y=eb] {rs_cnt.txt} node[black, pos=0.5, sloped, above, font=\scriptsize] {\algbasesparse};
\pgfplotsextra{\yafdrawaxis{0}{72}{0}{80}}
\end{axis}
\node[anchor=north east] at (0, -8pt) {(c)};
\end{tikzpicture}&
\begin{tikzpicture}
\begin{axis}[xlabel={$n/ 1000$}, ylabel= {time (s)},
    width = \subfigw,
    height = \subfigh,
    ymin = 0,
    scaled x ticks = false,
    scaled y ticks = false,
    cycle list name=yaf,
    yticklabel style={/pgf/number format/fixed},
    no markers,
]
\addplot table [x expr = {\thisrow{cnt} / 1000}, y=ea] {news20_cnt.txt} node[black, pos=0.7, sloped, above, font=\scriptsize] {\algupdsparse};
\addplot table [x expr = {\thisrow{cnt} / 1000}, y=eb] {news20_cnt.txt} node[black, pos=0.5, sloped, above, font=\scriptsize] {\algbasesparse};
\pgfplotsextra{\yafdrawaxis{0}{20}{0}{1320}}
\end{axis}
\node[anchor=north east] at (0, -8pt) {(d)};
\end{tikzpicture}\\
\begin{tikzpicture}
\begin{axis}[xlabel={$n/ 1000$}, ylabel= {time (s)},
    width = \subfigw,
    height = \subfigh,
    ymin = 0,
    ymax = 12,
    scaled x ticks = false,
    scaled y ticks = false,
    cycle list name=yaf,
    yticklabel style={/pgf/number format/fixed},
    no markers,
]
\addplot[yafcolor3] table [x expr = {\thisrow{cnt} / 1000}, y=ga] {author_cnt.txt} node[black, pos=0.7, sloped, above, font=\scriptsize] {\algupdgini};
\addplot[yafcolor4] table [x expr = {\thisrow{cnt} / 1000}, y=gb] {author_cnt.txt} node[black, pos=0.5, sloped, above, font=\scriptsize] {\algbasegini};
\pgfplotsextra{\yafdrawaxis{0}{54}{0}{12}}
\end{axis}
\node[anchor=north east] at (0, -8pt) {(e)};
\end{tikzpicture}&
\begin{tikzpicture}
\begin{axis}[xlabel={$n/ 1000$}, ylabel= {time (s)},
    width = \subfigw,
    height = \subfigh,
    ymin = 0,
    ymax = 4,
    scaled x ticks = false,
    scaled y ticks = false,
    cycle list name=yaf,
    yticklabel style={/pgf/number format/fixed},
    no markers,
]
\addplot[yafcolor3] table [x expr = {\thisrow{cnt} / 1000}, y=ga] {farm_cnt.txt} node[black, pos=0.5, sloped, below, inner sep=1pt, font=\scriptsize] {\algupdgini};
\addplot[yafcolor4] table [x expr = {\thisrow{cnt} / 1000}, y=gb] {farm_cnt.txt} node[black, pos=0.5, sloped, above, font=\scriptsize] {\algbasegini};
\pgfplotsextra{\yafdrawaxis{0}{4}{0}{4}}
\end{axis}
\node[anchor=north east] at (0, -8pt) {(f)};
\end{tikzpicture}\\
\begin{tikzpicture}
\begin{axis}[xlabel={$n/ 1000$}, ylabel= {time (s)},
    width = \subfigw,
    height = \subfigh,
    ymin = 0,
    ymax = 25,
    scaled x ticks = false,
    scaled y ticks = false,
    cycle list name=yaf,
    yticklabel style={/pgf/number format/fixed},
    no markers,
]
\addplot[yafcolor3] table [x expr = {\thisrow{cnt} / 1000}, y=ga] {rs_cnt.txt} node[black, pos=0.7, sloped, above, font=\scriptsize] {\algupdgini};
\addplot[yafcolor4] table [x expr = {\thisrow{cnt} / 1000}, y=gb] {rs_cnt.txt} node[black, pos=0.5, sloped, above, font=\scriptsize] {\algbasegini};
\pgfplotsextra{\yafdrawaxis{0}{72}{0}{25}}
\end{axis}
\node[anchor=north east] at (0, -8pt) {(g)};
\end{tikzpicture}&
\begin{tikzpicture}
\begin{axis}[xlabel={$n/ 1000$}, ylabel= {time (s)},
    width = \subfigw,
    height = \subfigh,
    ymin = 0,
    ymax = 700,
    scaled x ticks = false,
    scaled y ticks = false,
    cycle list name=yaf,
    yticklabel style={/pgf/number format/fixed},
    no markers,
]
\addplot[yafcolor3] table [x expr = {\thisrow{cnt} / 1000}, y=ga] {news20_cnt.txt} node[black, pos=0.7, sloped, above, font=\scriptsize] {\algupdgini};
\addplot[yafcolor4] table [x expr = {\thisrow{cnt} / 1000}, y=gb] {news20_cnt.txt} node[black, pos=0.5, sloped, above, font=\scriptsize] {\algbasegini};
\pgfplotsextra{\yafdrawaxis{0}{20}{0}{700}}
\end{axis}
\node[anchor=north east] at (0, -8pt) {(h)};
\end{tikzpicture}
\end{tabular}
\end{subcaptiongroup}

\caption{Running time as a function of $n$ data points as a function of $n$ for benchmark datasets.
Fig.~\ref{fig:timevscntauthor2},\ref{fig:timevscntginiauthor2}: \dtname{Author},
Fig.~\ref{fig:timevscntfarm2},\ref{fig:timevscntginifarm2}: \dtname{Farm},
Fig.~\ref{fig:timevscntrs2},\ref{fig:timevscntginirs2}: \dtname{SRAA}, and
Fig.~\ref{fig:timevscntnews202},\ref{fig:timevscntgininews202}: \dtname{20News}. Here, we set $\alpha = 0.1$.}
\label{fig:timevscntbench2}
\end{figure}

\end{document}